\newcommand{\br}[1]{(#1)}
\newcommand{\tpl}[1]{\br{#1}}
\newcommand{\seq}[1]{\langle #1 \rangle}
\newcommand{\Seq}[1]{\left\langle #1 \right\rangle}
\newcommand{\lng}{n}
\newcommand{\lia}{i}
\newcommand{\lib}{j}
\newcommand{\lic}{k}
\newcommand{\sqbbr}[1]{\llbracket\hspace{.2ex}#1\hspace{.2ex}\rrbracket}
\newcommand{\Sqbbr}[1]{\left\llbracket\hspace{.2ex}#1\hspace{.2ex}\right\rrbracket}
\newcommand{\atm}{p}
\newcommand{\atma}{p}
\newcommand{\atmb}{q}
\newcommand{\atmc}{r}
\newcommand{\atmd}{s}
\newcommand{\atms}{\mathscr{A}}
\newcommand{\atmst}[1]{\ensuremath{\mathsf{#1}}}
\newcommand{\olit}{l}
\newcommand{\olits}{\mathscr{L}}
\newcommand{\lopp}[1]{\lnot #1}
\newcommand{\lcmp}[1]{\lpnot #1}
\newcommand{\lit}{L}
\newcommand{\slit}{S}
\newcommand{\slitp}[1][\slit]{#1^+}
\newcommand{\slitn}[1][\slit]{#1^-}
\newcommand{\lpnot}[1][\!]{\mathop{\sim#1}}
\newcommand{\lpif}{\leftarrow}
\newcommand{\rl}{\pi}
\newcommand{\rla}{\pi}
\newcommand{\rlb}{\sigma}
\newcommand{\rlc}{\rho}
\newcommand{\hrl}[1][\rl]{\textrm{\sf \small H}_{#1}}
\newcommand{\hrla}{\hrl[\rla]}
\newcommand{\hrlb}{\hrl[\rlb]}
\newcommand{\hrlc}{\hrl[\rlc]}
\newcommand{\brl}[1][\rl]{\textrm{\sf \small B}_{#1}}
\newcommand{\brla}{\brl[\rla]}
\newcommand{\brlb}{\brl[\rlb]}
\newcommand{\prg}{P}
\newcommand{\prga}{P}
\newcommand{\prgb}{Q}
\newcommand{\prgu}{U}
\newcommand{\prgv}{V}
\newcommand{\dprg}{\text{\bf \em P}}
\newcommand{\all}[1]{\mathsf{all}(#1)}
\newcommand{\expone}[1]{#1^{\dagger}}
\newcommand{\exptwo}[1]{#1^{\ddagger}}
\newcommand{\defl}[1]{\mathsf{def}(#1)}
\newcommand{\rej}[3]{\mathsf{rej}^{#1}_{#2}(#3)}
\newcommand{\rejrd}[1]{\rej{}{\geq}{#1}}
\newcommand{\rejrds}[1]{\rej{\lnot}{>}{#1}}
\newcommand{\rejws}[2][\lm]{\rej{}{#1}{#2}}
\newcommand{\rejwss}[2][\lm]{\rej{\lnot}{#1}{#2}}
\newcommand{\rem}[2][]{\mathsf{rem}_{#1}(#2)}
\newcommand{\defrddef}{
	\defl{\dprg, \twi} &= \{
		(\lpnot \olit.)
		|
		\olit \in \olits \\
		& \hspace{1.25cm}\land
		\lnot (
			\exists \rl \in \all{\dprg}
			:
			\hrl = \olit
			\land
			\twi \ent \brl
		)
	\}
}
\newcommand{\rejrddef}{
	\rejrd{\dprg, \twi} &= \{
		\rla \in \prg_\lia
		|
		\lia < \lng
		\land
		\exists \lib \geq \lia \; \exists \rlb \in \prg_\lib
		:
		\hrla = \lpnot \hrlb\\
		& \hspace{5.25cm}\land
		\twi \ent \brlb
	\}
}
\newcommand{\rejrdsdef}{
	\rejrds{\dprg, \slit}
	&=
	\{
		\rla \in \prg_\lia
		|
		\lia < \lng
		\land
		\exists \lib > \lia
		\;
		\exists \rlb \in \prg_\lib
		:
		\hrlb \in \con{\hrla}
		\\
		& \hspace{5.0cm} \land
		\brlb \subseteq \slit
	\}
}
\newcommand{\rejwsdef}{
	\rejws{\dprg, \twi} &= \{
		\rla \in \prg_\lia
		|
		\lia < \lng
		\land
		\exists \lib > \lia \; \exists \rlb \in \prg_\lib
		:
		\hrla = \lpnot \hrlb\\
		& \hspace{2.6cm}\land
		\twi \ent \brlb
		\land
		\lm(\hrlb) > \lmmax(\brlb)
	\}
}
\newcommand{\con}[1]{\overline{#1}}
\newcommand{\rejwssdef}{
	\rejwss{\dprg, \twi} = \{
		\rla \in \prg_\lia
		|
		\lia < \lng
		&\land
		\exists \lib > \lia \; \exists \rlb \in \prg_\lib
		:
		\hrlb \in \con{\hrla}
		\\
		&\land
		\twi \ent \brlb
		\land
		\lmmin\!\left(\con{\hrla}\right) > \lmmax(\brlb)
	\}
	}
\newcommand{\twi}{J}
\newcommand{\twib}{I}
\newcommand{\twiall}[1][\twi]{#1^*}
\newcommand{\least}[1]{\mathsf{least}(#1)}
\newcommand{\Least}[1]{\mathsf{least}\left(#1\right)}
\newcommand{\prefix}[1]{\textsf{\small #1}}
\newcommand{\prefm}[1]{\ensuremath{\scriptscriptstyle \mathsf{#1}}}
\newcommand{\prefixm}[1]{\raisebox{-1pt}{\prefm{#1}}}
\newcommand{\mSM}{\prefixm{SM}}
\newcommand{\modsm}[1]{\sqbbr{#1}_{\mSM}}
\renewcommand{\S}[2][S]{\prefix{#1}\protect\nobreakdash#2\hspace{0pt}}
\newcommand{\mS}{\prefixm{S}}
\newcommand{\mods}[1]{\sqbbr{#1}_{\mS}}
\newcommand{\Mods}[1]{\Sqbbr{#1}_{\mS}}
\newcommand{\RD}[1]{\prefix{RD}\protect\nobreakdash#1\hspace{0pt}}
\newcommand{\mRD}{\prefixm{RD}}
\newcommand{\modrd}[1]{\sqbbr{#1}_{\mRD}}
\newcommand{\modrds}[1]{\sqbbr{#1}_{\mRD}^{\lnot}}
\newcommand{\WS}[1]{\prefix{WS}\protect\nobreakdash#1\hspace{0pt}}
\newcommand{\mWS}{\prefixm{WS}}
\newcommand{\modws}[1]{\sqbbr{#1}_{\mWS}}
\newcommand{\modwss}[1]{\sqbbr{#1}_{\mWS}^{\lnot}}
\newcommand{\mymodels}{\mathrel\mid\joinrel=}
\newcommand{\ent}{\mymodels}
\newcommand{\nent}{\not\ent}
\newcommand{\imcon}[1][\prg]{T_{#1}}
\newcommand{\imconrds}[1][\dprg, \twi]{T_{#1}}
\newcommand{\lm}{\ell}
\newcommand{\lmmax}{\ell^{\uparrow}}
\newcommand{\lmmin}{\ell^{\downarrow}}
\newcommand{\NP}{\textsf{NP}}
\newcommand{\coNP}{\textsf{coNP}}
 \newtheorem{theorem}{Theorem}[section]
 \newtheorem{lemma}[theorem]{Lemma}
 \newtheorem{proposition}[theorem]{Proposition}
 \newtheorem{definition}[theorem]{Definition}
 \newtheorem{example}[theorem]{Example}
\newcommand{\brifnotempty}[1]{\ifthenelse{\equal{#1}{}}{}{ \br{#1}}}
\newenvironment{lemma*}[2][]
	{\pagebreak[2] \par \noindent \textbf{Lemma~\ref{#2}}\brifnotempty{#1}.\it}{\par}
\newenvironment{theorem*}[2][]
	{\pagebreak[2] \par \noindent \textbf{Theorem~\ref{#2}}\brifnotempty{#1}.\it}{\par}
\newenvironment{proposition*}[2][]
	{\pagebreak[2] \par \noindent \textbf{Proposition~\ref{#2}}\brifnotempty{#1}.\it}{\par}
\newenvironment{corollary*}[2][]
	{\pagebreak[2] \par \noindent \textbf{Corollary~\ref{#2}}\brifnotempty{#1}.\it}{\par}
\newenvironment{textitem}
	{\setlength{\pltopsep}{.7ex}\setlength{\plitemsep}{.7ex}\begin{compactitem}}
	{\end{compactitem}}
\begin{document}
%

\nocopyright

\title{On Strong and Default Negation in Logic Program Updates (Extended Version)}
\author{Martin Slota\\CENTRIA\\New University of Lisbon
\And Martin Bal\'{a}\v{z}\\Faculty of Mathematics, Physics and Informatics\\Comenius University
\And Jo\~{a}o Leite\\CENTRIA\\New University of Lisbon}

\maketitle

\begin{abstract}
	Existing semantics for answer-set program updates fall into two categories:
either they consider only \emph{strong negation} in heads of rules, or they
primarily rely on \emph{default negation} in heads of rules and optionally
provide support for strong negation by means of a syntactic transformation.

In this paper we pinpoint the limitations of both these approaches and argue
that both types of negation should be first-class citizens in the context of
updates. We identify principles that plausibly constrain their interaction but
are not simultaneously satisfied by any existing rule update semantics. Then
we extend one of the most advanced semantics with direct support for strong
negation and show that it satisfies the outlined principles as well as a
variety of other desirable properties.


\end{abstract}

\section{Introduction}

\label{sec:introduction}

The increasingly common use of rule-based knowledge representation languages
in highly dynamic and information-rich contexts, such as the Semantic Web
\cite{Berners-Lee2001}, requires standardised support for updates of knowledge
represented by rules. Answer-set programming \cite{Gelfond1988,Gelfond1991}
forms the natural basis for investigation of rule updates, and various
approaches to answer-set program updates have been explored throughout the
last 15 years
\cite{Leite1997,Alferes98,Alferes2000,Eiter2002,Leite2003,Sakama2003,Alferes2005,Banti2005,Zhang2006,Sefranek2006,Delgrande2007,Osorio2007,Sefranek2011,Krumpelmann2012}.


The most straightforward kind of conflict arising between an original rule and
its update occurs when the original conclusion logically contradicts the newer
one. Though the technical realisation and final result may differ
significantly, depending on the particular rule update semantics, this kind of
conflict is resolved by letting the newer rule prevail over the older one.
Actually, under most semantics, this is also the \emph{only} type of conflict
that is subject to automatic resolution
\cite{Leite1997,Alferes2000,Eiter2002,Alferes2005,Banti2005,Delgrande2007,Osorio2007}.


From this perspective, allowing for both \emph{strong} and \emph{default
negation} to appear in heads of rules is essential for an expressive and
universal rule update framework \cite{Leite2003}. While strong negation is the
natural candidate here, used to express that an atom \emph{becomes explicitly
false}, default negation allows for more fine-grained control: the atom only
\emph{ceases to be true}, but its truth value may not be known after the
update. The latter also makes it possible to move between any pair of
epistemic states by means of updates, as illustrated in the following example:

\begin{example}
	[Railway crossing \cite{Leite2003}]
	\label{ex:railway crossing}
	Suppose that we use the following logic program to choose an action at a
	railway crossing:
	\begin{align*}
		\atmst{cross} &\lpif \lnot \atmst{train}.
		&
		\atmst{wait} &\lpif \atmst{train}.
		&
		\atmst{listen} &\lpif \lpnot \atmst{train}, \lpnot \lnot \atmst{train}.
	\end{align*}
	The intuitive meaning of these rules is as follows: one should \atmst{cross}
	if there is evidence that no train is approaching; \atmst{wait} if there is
	evidence that a train is approaching; \atmst{listen} if there is no such
	evidence.

	Consider a situation where a train is approaching, represented by the fact
	$(\atmst{train}.)$. After this train has passed by, we want to update our
	knowledge to an epistemic state where we lack evidence with regard to the
	approach of a train. If this was accomplished by updating with the fact
	$(\lnot \atmst{train}.)$, we would cross the tracks at the subsequent state,
	risking being killed by another train that was approaching. Therefore, we
	need to express an update stating that all past evidence for an atom is to
	be removed, which can be accomplished by allowing default negation in heads
	of rules. In this scenario, the intended update can be expressed by the fact
	$(\lpnot \atmst{train}.)$.
\end{example}

With regard to the support of negation in rule heads, existing rule update
semantics fall into two categories: those that only allow for strong negation,
and those that primarily consider default negation. As illustrated above, the
former are unsatisfactory as they render many belief states unreachable by
updates. As for the latter, they optionally provide support for strong
negation by means of a syntactic transformation.

Two such transformations are known from the literature, both of them based on
the principle of coherence: if an atom $\atm$ is true, its strong negation
$\lnot \atm$ cannot be true simultaneously, so $\lpnot \lnot \atm$ must be
true, and also vice versa, if $\lnot \atm$ is true, then so is $\lpnot \atm$.
The first transformation, introduced in \cite{Alferes1996}, encodes this
principle directly by adding, to both the original program and its update, the
following two rules for every atom $\atm$:
\begin{align*}
	\lpnot \lnot \atm &\lpif \atm.
	&
	\lpnot \atm &\lpif \lnot \atm.
\end{align*}
This way, every conflict between an atom $\atm$ and its strong negation $\lnot
\atm$ directly translates into two conflicts between the objective literals
$\atm$, $\lnot \atm$ and their default negations. However, the added rules
lead to undesired side effects that stand in direct opposition with basic
principles underlying updates. Specifically, despite the fact that the empty
program does not encode any change in the modelled world, the stable models
assigned to a program may change after an update by the empty program.

This undesired behaviour is addressed in an alternative transformation from
\cite{Leite2003} that encodes the coherence principle more carefully.
%
%
Nevertheless, this transformation also leads to undesired consequences, as
demonstrated in the following example:

\begin{example}
	[Faulty sensor]
	\label{ex:faulty sensor}
	Suppose that we collect data from sensors and, for security reasons,
	multiple sensors are used to supply information about the critical fluent
	$\atm$. In case of a malfunction of one of the sensors, we may end up with
	an inconsistent logic program consisting of the following two facts:
	\begin{align*}
		\atm & .
		&
		\lnot \atm & .
	\end{align*}
	At this point, no stable model of the program exists and action needs to be
	taken to find out what is wrong. If a problem is found in the sensor that
	supplied the first fact $(\atm.)$, after the sensor is repaired, this
	information needs to be reset by updating the program with the fact $(\lpnot
	\atm.)$. Following the universal pattern in rule updates, where recovery
	from conflicting states is always possible, we expect that this update
	is sufficient to assign a stable model to the updated program.  However, the
	transformational semantics for strong negation defined in \cite{Leite2003}
	still does not provide any stable model -- we remain without a valid
	epistemic state when one should in fact exist.
\end{example}

In this paper we address the issues with combining strong and default negation
in the context of rule updates. Based on the above considerations, we
formulate a generic desirable principle that is violated by the existing
approaches. Then we show how two distinct definitions of one of the most
well-behaved rule update semantics \cite{Alferes2005,Banti2005} can be
equivalently extended with support for strong negation. The resulting
semantics not only satisfies the formulated principle, but also retains the
formal and computational properties of the original semantics. More
specifically, our main contributions are as follows:
\begin{itemize}

	\item based on Example~\ref{ex:faulty sensor}, we introduce the \emph{early
		recovery principle} that captures circumstances under which a stable model
		after a rule update should exist;

	\item we extend the \emph{well-supported semantics for rule updates}
		\cite{Banti2005} with direct support for strong negation;

	\item we define a fixpoint characterisation of the new semantics, based on
		the \emph{refined dynamic stable model} semantics for rule updates
		\cite{Alferes2005};
		

	\item we show that the defined semantics enjoy the early recovery principle
		as well as a range of desirable properties for rule updates known from the
		literature.
\end{itemize}

This paper is organised as follows: In Sect.~\ref{sec:background} we
present the syntax and semantics of logic programs, generalise the
well-supported semantics from the class of normal programs to extended ones
and define the rule update semantics from \cite{Alferes2005,Banti2005}. Then,
in Sect.~\ref{sec:semantics dynamic}, we formally establish the early recovery
principle, define the new rule update semantics for strong negation and show
that it satisfies the principle. In Sect.~\ref{sec:properties} we introduce
other established rule update principles and show that the proposed semantics
satisfies them. We discuss our findings and conclude in
Sect.~\ref{sec:discussion}.\footnote{%
	The proofs of all propositions and theorems can be found in Appendix
	\ref{app:proofs}.
}


\section{Background}

\label{sec:background}

In this section we introduce the necessary technical background and generalise
the well-supported semantics \cite{Fages1991} to the class of extended
programs.

\subsection{Logic Programs}

\label{sec:logic programs}

In the following we present the syntax of non-disjunctive logic programs with
both strong and default negation in heads and bodies of rules, along with the
definition of stable models of such programs from \cite{Leite2003} that is
equivalent to the original definitions based on reducts
\cite{Gelfond1988,Gelfond1991,Inoue1998}. Furthermore, we define an
alternative characterisation of the stable model semantics: the well-supported
models of normal logic programs \cite{Fages1991}.

We assume that a countable set of propositional atoms $\atms$ is given and
fixed. An \emph{objective literal} is an atom $\atm \in
\atms$ or its strong negation $\lnot \atm$. We denote the set of all objective
literals by $\olits$. A \emph{default literal} is an objective literal
preceded by $\lpnot[]$ denoting default negation. A \emph{literal} is either
an objective or a default literal. We denote the set of all literals by
$\olits^*$. As a convention, double negation is absorbed, so that $\lnot \lnot
\atm$ denotes the atom $\atm$ and $\lpnot \lpnot \olit$ denotes the objective
literal $\olit$. Given a set of literals $\slit$, we introduce the following
notation: $\slitp = \Set{\olit \in \olits | \olit \in \slit}$, $\slitn
= \Set{\olit \in \olits | \lpnot \olit \in \slit}$, $\lpnot \slit
= \Set{\lpnot \lit | \lit \in \slit}$.

An \emph{extended rule} is a pair $\rl = \tpl{\hrl, \brl}$ where $\hrl$ is a
literal, referred to as the \emph{head of $\rl$}, and $\brl$ is a finite set
of literals, referred to as the \emph{body of $\rl$}.
Usually we write $\rl$ as
$
	(\hrl \lpif \brl^+, \lpnot \brl^-.)
$.
A \emph{generalised rule} is an extended rule that contains no occurrence of
$\lnot$, i.e., its head and body consist only of atoms and their default
negations. A \emph{normal rule} is a generalised rule that has an atom in the
head. A \emph{fact} is an extended rule whose body is empty and a
\emph{tautology} is any extended rule $\rl$ such that $\hrl \in \brl$. An
\emph{extended (generalised, normal) program} is a set of extended
(generalised, normal) rules.

An \emph{interpretation} is a consistent subset of the set of objective
literals, i.e., a subset of $\olits$ does not contain both $\atm$ an $\lnot
\atm$ for any atom $\atm$. The satisfaction of an objective literal $\olit$,
default literal $\lpnot \olit$, set of literals $\slit$, extended rule $\rl$
and extended program $\prg$ in an interpretation $\twi$ is defined in the
usual way: $\twi \ent \olit$ iff $\olit \in \twi$; $\twi \ent \lpnot \olit$
iff $\olit \notin \twi$; $\twi \ent \slit$ iff $\twi \ent \lit$ for all $\lit
\in \slit$; $\twi \ent \rl$ iff $\twi \ent \brl$ implies $\twi \ent \hrl$;
$\twi \ent \prg$ iff $\twi \ent \rl$ for all $\rl \in \prg$.
Also, $\twi$ is a \emph{model of $\prg$} if $\twi \ent \prg$, and $\prg$ is
\emph{consistent} if it has a model.

\begin{definition}
	[Stable model]
	Let $\prg$ be an extended program. The set $\modsm{\prg}$ of \emph{stable
	models of $\prg$} consists of all interpretations $\twi$ such that
	\[
		\twiall = \least{
			\prg
			\cup
			\defl{\twi}
		}
	\]
	where $\defl{\twi} = \Set{\lpnot \olit. | \olit \in \olits \setminus
	\twi}$, $\twiall = \twi \cup \lpnot (\olits \setminus \twi)$ and
	$\least{\cdot}$ denotes the least model of the argument program in which all
	literals are treated as propositional atoms.
\end{definition}

A \emph{level mapping} is a function that maps every atom to a natural number.
Also, for any default literal $\lpnot \atm$, where $\atm \in \atms$, and
finite set of atoms and their default negations $\slit$, $\lm(\lpnot \atm) =
\lm(\atm)$, $\lmmin(\slit) = \min \set{\lm(\lit) | \lit \in \slit}$ and
$\lmmax(\slit) = \max \set{\lm(\lit) | \lit \in \slit}$.

\begin{definition}
	[Well-supported model of a normal program]
	Let $\prg$ be a normal program and $\lm$ a level mapping. An interpretation
	$\twi \subseteq \atms$ is a \emph{well-supported model of $\prg$ w.r.t.\
	$\lm$} if the following conditions are satisfied:
	\begin{enumerate}[1.]
		\item $\twi$ is a model of $\prg$;

		\item For every atom $\atm \in \twi$ there exists a rule $\rl \in \prg$
			such that
			\[
				\hrl = \atm \land \twi \ent \brl \land \lm(\hrl) > \lmmax(\brl)
				\enspace.
			\]
	\end{enumerate}
	The set $\modws{\prg}$ of \emph{well-supported models of $\prg$} consists of
	all interpretations $\twi \subseteq \atms$ such that $\twi$ is a
	well-supported model of $\prg$ w.r.t.\ some level mapping.
\end{definition}

As shown in \cite{Fages1991}, well-supported models coincide with stable
models:

\begin{proposition}[\hspace{-0.07ex}\cite{Fages1991}]
	\label{prop:ws coincide with sm}
	Let $\prg$ be a normal program. Then, $\modws{\prg} = \modsm{\prg}$.\end{proposition}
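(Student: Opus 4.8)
The plan is to prove the two inclusions separately, in each case collapsing the stable-model condition $\twiall = \least{\prg \cup \defl{\twi}}$ to a statement purely about the atoms derived by the least-model computation. Since $\prg$ is normal, no rule head is a default literal, so the only default literals occurring in $\least{\prg \cup \defl{\twi}}$ are the facts contributed by $\defl{\twi}$, i.e.\ exactly $\lpnot(\olits \setminus \twi)$. Hence the default part of $\least{\prg \cup \defl{\twi}}$ always matches the default part of $\twiall$, and the stable-model condition reduces to the requirement that the set of atoms in $\least{\prg \cup \defl{\twi}}$ be exactly $\twi$. Writing $M$ for this set of derived atoms, I will build both directions around the increasing sequence $M_0 = \emptyset$, $M_{i+1} = M_i \cup \{\, \hrl \mid \rl \in \prg,\ \brlp \subseteq M_i,\ \brln \cap \twi = \emptyset \,\}$, whose limit is $M$ (this is just the bottom-up evaluation of $\prg$ relative to $\twi$, equivalently the least model of the reduct). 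It then suffices, in each direction, to establish $\twi = M$.

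For $\modsm{\prg} \subseteq \modws{\prg}$, I take a stable model $\twi$, so $\twi = M$, and define a level mapping by $\lm(\atm) = \min\{\, i \geq 1 \mid \atm \in M_i \,\}$ for $\atm \in \twi$ and $\lm(\atm) = 0$ for $\atm \notin \twi$. Condition~1 holds because $\twiall = \least{\prg \cup \defl{\twi}}$ is a model of $\prg \cup \defl{\twi}$, whence $\twi \ent \prg$. For condition~2, an atom $\atm \in \twi$ first appearing at stage $n = \lm(\atm)$ is produced by some rule $\rl \in \prg$ with $\hrl = \atm$, $\brlp \subseteq M_{n-1}$ and $\brln \cap \twi = \emptyset$; then $\twi \ent \brl$, every atom of $\brlp$ has level $\leq n-1$, and every atom underlying a literal of $\brln$ lies outside $\twi$ and so has level $0$, giving $\lm(\hrl) = n > \lmmax(\brl)$.

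For $\modws{\prg} \subseteq \modsm{\prg}$, I take a well-supported model $\twi$ with level mapping $\lm$ and show $\twi = M$. The inclusion $M \subseteq \twi$ follows from condition~1: $\twi$ is closed under every rule whose body it satisfies, in particular under the rules generating $M$ (those with $\brln \cap \twi = \emptyset$, for which $\twi \ent \lpnot\brln$), and $M$ is the least such closed set. For $\twi \subseteq M$ I argue by contradiction, using $\lm$ as a well-founded measure: choosing $\atm \in \twi \setminus M$ of minimal level, condition~2 yields a rule $\rl$ with $\hrl = \atm$, $\twi \ent \brl$ and $\lm(\atm) > \lmmax(\brl)$; from $\twi \ent \brl$ we get $\brlp \subseteq \twi$ and $\brln \cap \twi = \emptyset$, so $\rl$ is one of the generating rules, and every atom of $\brlp$ has level strictly below $\lm(\atm)$ and lies in $\twi$, hence by minimality in $M$. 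Closure of $M$ then forces $\atm \in M$, a contradiction.

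The delicate point I would flag is the treatment of default-negated body atoms in condition~2: the definition lets $\lmmax(\brl)$ range over the \emph{whole} body, with $\lm(\lpnot \atmb) = \lm(\atmb)$, so the head's level must dominate the levels of atoms occurring under default negation as well, not merely the positive ones. This is precisely why, in the stable-to-well-supported direction, atoms outside $\twi$ are placed at level $0$ while atoms in $\twi$ receive levels $\geq 1$: the supporting rule of an atom in $\twi$ necessarily has all its default-negated body atoms outside $\twi$ (else its body would be unsatisfied), so they sit strictly below the head. Managing this interaction, together with the convention $\lmmax(\emptyset) = 0$ for facts, is the only genuinely non-routine part; the remaining steps are the standard equivalences between the reduct, its least model, and the iterated consequence operator.
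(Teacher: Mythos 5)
Your proof is correct and takes essentially the same route as the paper's own argument (given there for the generalisation to extended programs, Proposition~\ref{prop:extended ws coincides with sm}): in both, the level mapping is identified with the stage of first derivation in the bottom-up least-fixpoint computation of $\prg \cup \defl{\twi}$, with the stable-to-well-supported direction using first-appearance stages as levels and the converse direction handled by induction on stages (closure) together with induction on levels (your minimal-counterexample argument). The only cosmetic difference is that you first factor out the default-literal part (using normality) and iterate a reduct-style operator relativised to $\twi$, whereas the paper iterates the immediate consequence operator of the full program $\prg \cup \defl{\twi}$ with all literals treated as atoms.
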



\subsection{Well-supported Models for Extended Programs}

\label{sec:semantics static}

The well-supported models defined in the previous section for normal logic
programs can be generalised in a straightforward manner to deal with strong
negation while maintaining their tight relationship with stable models (c.f.\
Proposition~\ref{prop:ws coincide with sm}). This will come useful in
Subsect.~\ref{sec:rule updates} and Sect.~\ref{sec:semantics dynamic} when we
discuss adding support for strong negation to semantics for rule updates.

We extend level mappings from atoms and their default negations to all
literals: An \emph{(extended) level mapping} $\lm$ maps every objective
literal to a natural number. Also, for any default literal $\lpnot \olit$ and
finite set of literals $\slit$, $\lm(\lpnot \olit) = \lm(\atm)$,
$\lmmin(\slit) = \min \set{\lm(\lit) | \lit \in \slit}$ and $\lmmax(\slit) =
\max \set{\lm(\lit) | \lit \in \slit}$.

\begin{definition}
	[Well-supported model of an extended program]
	Let $\prg$ be an extended program and $\lm$ a level mapping. An
	interpretation $\twi$ is a \emph{well-supported model of $\prg$ w.r.t.\
	$\lm$} if the following conditions are satisfied:
	\begin{enumerate}[1.]
		\item $\twi$ is a model of $\prg$;

		\item For every objective literal $\olit \in \twi$ there exists a rule
			$\rl \in \prg$ such that
			\[
				\hrl = \olit \land \twi \ent \brl \land \lm(\hrl) > \lmmax(\brl)
				\enspace.
			\]
	\end{enumerate}
	The set $\modws{\prg}$ of \emph{well-supported models of $\prg$} consists of
	all interpretations $\twi$ such that $\twi$ is a well-supported model of
	$\prg$ w.r.t.\ some level mapping.
\end{definition}

We obtain a generalisation of Prop.~\ref{prop:ws coincide with sm} to the
class of extended programs:

\begin{proposition}
	\label{prop:extended ws coincides with sm}
	Let $\prg$ be an extended program. Then, $\modws{\prg} = \modsm{\prg}$.
\end{proposition}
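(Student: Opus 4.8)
The plan is to prove both inclusions directly, mirroring the classical argument behind Proposition~\ref{prop:ws coincide with sm} but treating the literals of $\prg \cup \defl{\twi}$ as propositional atoms. The central bridge is a correspondence between satisfaction by $\twi$ and membership in $\twiall$. First I would record this bridging fact: since $\twiall = \twi \cup \lpnot(\olits \setminus \twi)$, an objective literal lies in $\twiall$ iff it lies in $\twi$, and a default literal $\lpnot \olit$ lies in $\twiall$ iff $\olit \notin \twi$; hence for any literal, membership in $\twiall$ coincides with its satisfaction by $\twi$, and for any rule $\rl$ we have $\brl \subseteq \twiall$ iff $\twi \ent \brl$, and $\hrl \in \twiall$ iff $\twi \ent \hrl$. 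Writing $\Gamma = \prg \cup \defl{\twi}$ for the resulting definite program, it follows that $\twiall$ always models $\defl{\twi}$ and that $\twiall$ models $\Gamma$ exactly when $\twi$ models $\prg$. This identifies condition~1 of well-supportedness with one half of the stable-model condition, and it dispatches default literals occurring in heads uniformly, since a default-headed rule of $\prg$ is satisfied by $\twi$ precisely when the associated propositional rule is satisfied by $\twiall$.

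For $\modws{\prg} \subseteq \modsm{\prg}$, given $\twi$ well-supported w.r.t.\ a level mapping $\lm$, I would establish $\least{\Gamma} = \twiall$. The inclusion $\least{\Gamma} \subseteq \twiall$ is immediate from the bridging fact, as $\twiall$ is then a model of $\Gamma$ and the least model is contained in every model. For the converse inclusion, the default literals in $\twiall$ are facts of $\defl{\twi}$, hence already in $\least{\Gamma}$; and each objective literal $\olit \in \twi$ is shown to lie in $\least{\Gamma}$ by strong induction on $\lm(\olit)$. The supporting rule $\rl$ from condition~2 satisfies $\hrl = \olit$, $\twi \ent \brl$ and $\lmmax(\brl) < \lm(\olit)$, so its objective body literals belong to $\twi$ with strictly smaller level and lie in $\least{\Gamma}$ by the induction hypothesis, while its default body literals are facts of $\defl{\twi}$; thus $\brl \subseteq \least{\Gamma}$ and $\rl$ fires, deriving $\olit$.

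For the converse $\modsm{\prg} \subseteq \modws{\prg}$, given $\twi$ with $\least{\Gamma} = \twiall$, condition~1 is immediate from the bridging fact. For condition~2 I would read a level mapping off the fixpoint iteration $\least{\Gamma} = \bigcup_n \imcon[\Gamma]^n(\emptyset)$: set $\lm(\olit)$ to the least $n$ with $\olit \in \imcon[\Gamma]^n(\emptyset)$ when $\olit \in \twi$, and $\lm(\olit) = 0$ otherwise. Any $\olit \in \twi$ is first introduced at some stage $n \geq 1$ by a rule with head $\olit$ whose body is contained in stage $n-1$; this rule must come from $\prg$, since every fact of $\defl{\twi}$ has a default-literal head. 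Its objective body literals then have level below $n$, and its default body literals $\lpnot \olit'$ satisfy $\olit' \notin \twi$, hence level $0 < n$; so $\lm(\olit) > \lmmax(\brl)$, as required, under the convention that an empty body imposes no lower bound (so facts, introduced at stage $1$, are supported).

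The main obstacle I anticipate lies in the backward direction: ensuring the level mapping is well defined and that the strict inequality $\lm(\hrl) > \lmmax(\brl)$ survives the presence of default literals in both bodies and heads. This is exactly what forces the harmless assignment of level $0$ to the objective literals outside $\twi$, so that default body literals never inflate $\lmmax(\brl)$. Once the bridging fact is in place, default negation in heads causes no real difficulty, since such literals are supplied uniformly as facts by $\defl{\twi}$ and therefore never need to be derived through the level ordering.
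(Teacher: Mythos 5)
Your proof is correct and takes essentially the same route as the paper's: one inclusion is established by induction on the level mapping, and the other by defining a level mapping from the stages of the fixpoint iteration of the immediate consequence operator on $\prg \cup \defl{\twi}$, with the same observations that default body literals are supplied as facts of $\defl{\twi}$ and that objective-headed supporting rules must come from $\prg$. The only cosmetic difference is that where the paper proves $\least{\prg \cup \defl{\twi}} \subseteq \twiall$ by induction on iteration stages, you note that $\twiall$ is a model of the program and invoke minimality of the least model directly -- a harmless streamlining.
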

\begin{proofmain}
	See Appendix~\ref{app:proofs}, page~\pageref{proof:prop:extended ws
	coincides with sm}. \qed
\end{proofmain}


\subsection{Rule Updates}

\label{sec:rule updates}

We turn our attention to rule updates, starting with one of the most
advanced rule update semantics, the \emph{refined dynamic stable models} for
sequences of generalised programs \cite{Alferes2005}, as well as the
equivalent definition of \emph{well-supported models} \cite{Banti2005}. Then
we define the transformations for adding support for strong negation to such
semantics \cite{Alferes1996,Leite2003}.

A rule update semantics provides a way to assign stable models to a pair or
sequence of programs where each component represents an update of the
preceding ones. Formally, a \emph{dynamic logic program} (DLP) is a finite
sequence of extended programs and by $\all{\dprg}$ we denote the multiset of
all rules in the components of $\dprg$. A rule update semantics \S{} assigns
a \emph{set of \S-models}, denoted by $\mods{\dprg}$, to $\dprg$.

We focus on semantics based on the causal rejection principle
\cite{Leite1997,Alferes2000,Eiter2002,Leite2003,Alferes2005,Banti2005,Osorio2007}
which states that a rule is \emph{rejected} if it is in a direct conflict with
a more recent rule. The basic type of conflict between rules $\rla$ and
$\rlb$ occurs when their heads contain complementary literals, i.e.\ when
$\hrla = \lpnot \hrlb$. Based on such conflicts and on a stable model
candidate, a \emph{set of rejected rules} can be determined and it can be
verified that the candidate is indeed stable w.r.t.\ the remaining rules.

We define the most mature of these semantics, providing two equivalent
definitions: the \emph{refined dynamic stable models} \cite{Alferes2005}, or
\emph{\RD-semantics}, defined using a fixpoint equation, and the
\emph{well-supported models} \cite{Banti2005}, or \emph{\WS-semantics}, based
on level mappings.

\begin{definition}
	[\RD-semantics \cite{Alferes2005}]
	Let $\dprg = \seq{\prg_\lia}_{\lia < \lng}$ be a DLP without strong
	negation. Given an interpretation $\twi$, the multisets of rejected rules
	$\rejrd{\dprg, \twi}$ and of default assumptions $\defl{\dprg, \twi}$ are
	defined as follows:
	\begin{align*}
		\rejrddef,
		\\
		\defrddef.
	\end{align*}
	The set $\modrd{\dprg}$ of \emph{\RD-models of $\dprg$} consists of all
	interpretations $\twi$ such that
	\[
		\twiall = \Least{
			[ \all{\dprg} \setminus \rejrd{\dprg, \twi} ]
			\cup
			\defl{\dprg, \twi}
		}
	\]
	where $\twiall$ and $\least{\cdot}$ are defined as before.
\end{definition}

\begin{definition}
	[\WS-semantics \cite{Banti2005}]
	Let $\dprg = \seq{\prg_\lia}_{\lia < \lng}$ be a DLP without strong
	negation. Given an interpretation $\twi$ and a level mapping $\lm$, the
	multiset of rejected rules $\rejws{\dprg, \twi}$ is defined as follows:
	\begin{align*}
		\rejwsdef.
	\end{align*}
	The set $\modws{\dprg}$ of \emph{\WS-models of $\dprg$} consists of all
	interpretations $\twi$ such that for some level mapping $\lm$, the following
	conditions are satisfied:
	\begin{enumerate}[1.]
		\item $\twi$ is a model of $\all{\dprg} \setminus \rejws{\dprg, \twi}$;

		\item For every $\olit \in \twi$ there exists some rule $\rl \in
			\all{\dprg} \setminus \rejws{\dprg, \twi}$ such that
			\[
				\hrl = \olit \land \twi \ent \brl \land \lm(\hrl) > \lmmax(\brl)
				\enspace.
			\]
	\end{enumerate}
\end{definition}

Unlike most other rule update semantics, these semantics can properly deal
with tautological and other irrelevant updates, as illustrated in the
following example:

\begin{example}[Irrelevant updates]
  Consider the DLP $\dprg = \seq{\prga, \prgu}$ where programs $\prga$,
  $\prgu$ are as follows:
  \begin{align*}
    \prga: &&
    \atmst{day} &\lpif \lpnot \atmst{night}.
    &&&&&
    \atmst{stars} &\lpif \atmst{night}, \lpnot \atmst{cloudy}. 
    \\
    &&
    \atmst{night} &\lpif \lpnot \atmst{day}.
    &&&&&
    \lpnot \atmst{stars}&.
    \\
    \prgu: &&
    \atmst{stars} &\lpif \atmst{stars}.
  \end{align*}
	Note that program $\prga$ has the single stable model $\twi_1 =
	\set{\atmst{day}}$ and $\prgu$ contains a single tautological rule, i.e.\ it
	does not encode any change in the modelled domain. Thus, we expect that
	$\dprg$ also has the single stable model $\twi_1$. Nevertheless, many rule
	update semantics, such as those introduced in
	\cite{Leite1997,Alferes2000,Eiter2002,Leite2003,Sakama2003,Zhang2006,Osorio2007,Delgrande2007,Krumpelmann2012},
	are sensitive to this or other tautological updates, introducing or
	eliminating models of the original program.

	In this case, the unwanted model candidate is $\twi_2 = \set{\atmst{night},
	\atmst{stars}}$ and it is neither an \RD- nor a \WS-model of $\dprg$, though
	the reasons for this are technically different under these two semantics.
	It is not difficult to verify that, given an arbitrary level mapping $\lm$,
	the respective sets of rejected rules and the set of default assumptions are
	as follows:
  \begin{align*}
    \rejrd{\dprg, \twi_2}
    &=
    \set{
      (\atmst{stars} \lpif \atmst{night}, \lpnot \atmst{cloudy}.),
      (\lpnot \atmst{stars}.)
    },
		\\
		\rejws{\dprg, \twi_2}
		&=
		\emptyset
		,
		\\
		\defl{\dprg, \twi_2}
		&=
		\set{
			(\lpnot \atmst{cloudy}.),
			(\lpnot \atmst{day}.)
		}.
  \end{align*}
	Note that $\rejws{\dprg, \twi_2}$ is empty because, independently of $\lm$,
	no rule $\rl$ in $\prgu$ satisfies the condition $\lm(\hrl) > \lmmax(\brl)$,
	so there is no rule that could reject another rule. Thus, the atom
	$\atmst{stars}$ belongs to $\twiall[\twi_2]$ but does not belong to $\least{
		[ \all{\dprg} \setminus \rejrd{\dprg, \twi_2} ] \cup \defl{\dprg, \twi_2}
	} $, so $\twi_2$ is not an \RD-model of $\dprg$.  Furthermore, no model of
	$\all{\dprg} \setminus \rejws{\dprg, \twi_2}$ contains $\atmst{stars}$, so
	$\twi_2$ cannot be a \WS-model of $\dprg$.

	Furthermore, the resilience of \RD- and \WS-semantics is not limited to
	empty and tautological updates, but extends to other irrelevant updates as
	well \cite{Alferes2005,Banti2005}. For example, consider the DLP $\dprg' =
	\seq{\prga, \prgu'}$ where $\prgu' = \set{(\atmst{stars} \lpif
	\atmst{venus}.), (\atmst{venus} \lpif \atmst{stars}.)}$. Though the updating
	program contains non-tautological rules, it does not provide a bottom-up
	justification of any model other than $\twi_1$ and, indeed, $\twi_1$ is the
	only \RD- and \WS-model of $\dprg'$.
\end{example}

We also note that the two presented semantics for DLPs without strong negation
provide the same result regardless of the particular DLP to which they are
applied.

\begin{proposition}
	[\hspace{-.07ex}\cite{Banti2005}]
	\label{prop:ws coincides with rd}
	Let $\dprg$ be a DLP without strong negation. Then, $\modws{\dprg} =
	\modrd{\dprg}$.
\end{proposition}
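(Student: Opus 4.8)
Let $\dprg$ be a DLP without strong negation. Then $\modws{\dprg} = \modrd{\dprg}$.

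I need to prove that the well-supported models and refined dynamic stable models coincide for DLPs without strong negation.

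Let me understand the two definitions:

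**RD-semantics:**
- $\rejrd{\dprg, \twi} = \{\rla \in \prg_\lia \mid \lia < \lng \land \exists \lib \geq \lia, \exists \rlb \in \prg_\lib : \hrla = \lpnot \hrlb \land \twi \ent \brlb\}$
- $\defl{\dprg, \twi} = \{(\lpnot \olit.) \mid \olit \in \olits \land \lnot(\exists \rl \in \all{\dprg} : \hrl = \olit \land \twi \ent \brl)\}$
- $\twi$ is an RD-model iff $\twiall = \Least{[\all{\dprg} \setminus \rejrd{\dprg, \twi}] \cup \defl{\dprg, \twi}}$

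**WS-semantics:**
- $\rejws{\dprg, \twi} = \{\rla \in \prg_\lia \mid \lia < \lng \land \exists \lib > \lia, \exists \rlb \in \prg_\lib : \hrla = \lpnot \hrlb \land \twi \ent \brlb \land \lm(\hrlb) > \lmmax(\brlb)\}$
- $\twi$ is a WS-model iff there's a level mapping $\lm$ such that:
  1. $\twi \models \all{\dprg} \setminus \rejws{\dprg, \twi}$
  2. For every $\olit \in \twi$, there's a rule with $\hrl = \olit \land \twi \ent \brl \land \lm(\hrl) > \lmmax(\brl)$

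The key differences:
- RD uses $\lib \geq \lia$ (rejection by same-level rules allowed - this is the "refined" part), WS uses $\lib > \lia$ but with a level constraint $\lm(\hrlb) > \lmmax(\brlb)$
- RD uses a fixpoint (least model) equation, WS uses model + well-support conditions

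This is a known result (cited to Banti2005). Let me think about the proof strategy.
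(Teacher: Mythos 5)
Your proposal contains no proof: after restating the two definitions and correctly identifying where they differ (rejection with $\lib \geq \lia$ versus $\lib > \lia$ plus the level condition $\lm(\hrlb) > \lmmax(\brlb)$, and a fixpoint equation versus model-plus-well-support conditions), it ends with the announcement that you will think about a strategy. Neither inclusion $\modws{\dprg} \subseteq \modrd{\dprg}$ nor $\modrd{\dprg} \subseteq \modws{\dprg}$ is argued, no level mapping is ever constructed, and no induction is carried out. The gap is therefore the entire argument, not a single step.

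For context, the paper itself does not prove this statement either: it is a background result imported verbatim from the cited work of Banti et al., which is why it appears as a cited proposition with no entry in the appendix. But the paper does prove the exactly analogous equivalence for its extended semantics (Theorem~\ref{thm:extended ws coincides with extended rd}, via Lemmas~\ref{lemma:extended ws is extended rd} and~\ref{lemma:extended rd is extended ws}), and that is the template a self-contained proof would have to follow. In one direction, given a \WS-model $\twi$ with level mapping $\lm$, one shows by induction on $\lm(\lit)$ that every literal of $\twiall$ is produced by the iterated consequence operator underlying $\least{[\all{\dprg} \setminus \rejrd{\dprg, \twi}] \cup \defl{\dprg, \twi}}$, and by induction on the iteration stage that nothing outside $\twiall$ is produced. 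In the converse direction, one defines $\lm(\olit)$ as the first stage of that fixpoint iteration at which $\olit$ or $\lpnot \olit$ appears, and verifies the two \WS{} conditions for this $\lm$. The genuinely delicate point, which your notes flag but do not address, is that the two rejection sets do not coincide: $\rejrd{\dprg, \twi}$ permits same-index rejection while $\rejws{\dprg, \twi}$ does not but imposes the level constraint, so one must argue that rules rejected by one semantics and not the other can neither contribute to the least model nor be needed for well-support. Until that reconciliation is carried out, nothing has been established.
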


In case of the stable model semantics for a single program, strong negation
can be reduced away by treating all objective literals as atoms and adding,
for each atom $\atm$, the integrity constraint $(\lpif \atm, \lnot \atm.)$ to
the program \cite{Gelfond1991}. However, this transformation does not serve
its purpose when adding support for strong negation to causal rejection
semantics for DLPs because integrity constraints have empty heads, so
according to these rule update semantics, they cannot be used to reject any
other rule. For example, a DLP such as $\seq{\set{\atm., \lnot \atm.},
\set{\atm.}}$ would remain without a stable model even though the DLP
$\seq{\set{\atm., \lpnot \atm.}, \set{\atm.}}$ does have a stable model.

To capture the conflict between opposite objective literals $\olit$ and $\lnot
\olit$ in a way that is compatible with causal rejection semantics, a slightly
modified syntactic transformation can be performed, translating such
conflicts into conflicts between objective literals and their default
negations. Two such transformations have been suggested in the literature
\cite{Alferes1996,Leite2003}, both based on the principle of coherence. For
any extended program $\prg$ and DLP $\dprg = \seq{\prg_\lia}_{\lia < \lng}$
they are defined as follows:
\begin{align*}
	\expone{\prg} 
	&=
	\prg \cup \{
		\lpnot \lnot \olit \lpif \olit.
		|
		\olit \in \olits
	\},\\
	\hspace{-0.1cm}\expone{\dprg} &= \Seq{\expone{\prg_\lia}}_{\lia < \lng},\\
	\exptwo{\prg}
	&=
	\prg \cup \{
		\lpnot \lnot \hrl \lpif \brl.
		|
		\rl \in \prg \land \hrl \in \olits
	\},\\
	\hspace{-0.1cm}\exptwo{\dprg} &= \Seq{\exptwo{\prg_\lia}}_{\lia < \lng}.
\end{align*}
These transformations lead to four possibilities for defining the semantics of
an arbitrary DLP $\dprg$: $\modrd{\expone{\dprg}}$, $\modrd{\exptwo{\dprg}}$,
$\modws{\expone{\dprg}}$ and $\modws{\exptwo{\dprg}}$. We discuss these in the
following section.


\section{Direct Support for Strong Negation in Rule Updates}

\label{sec:semantics dynamic}

The problem with existing semantics for strong negation in rule updates is
that semantics based on the first transformation ($\expone{\dprg}$) assign too
many models to some DLPs, while semantics based on the second transformation
($\exptwo{\dprg}$) sometimes do not assign any model to a DLP that should have
one. The former is illustrated in the following example:

\begin{example}
	[Undesired side effects of the first transformation]
	\label{ex:first transformation}
	Consider the DLP $\dprg_1 = \seq{\prga, \prgu}$ where $\prga = \set{\atma.,
	\lnot \atma.}$ and $\prgu = \emptyset$. Since $\prga$ has no stable model
	and $\prgu$ does not encode any change in the represented domain, it should
	follow that $\dprg_1$ has no stable model either. However,
	$\modrd{\expone{\dprg_1}} = \modws{\expone{\dprg_1}} = \Set{\set{\atma},
	\set{\lnot \atma}}$, i.e.\ two models are assigned to $\dprg_1$ when using
	the first transformation to add support for strong negation. To verify this,
	observe that $\expone{\dprg_1} = \seq{\expone{\prga}, \expone{\prgu}}$ where
	\begin{align*}
		\expone{\prga}: &
			&
			\atm &.
			&
			\lnot \atm &.
		&&&&&
		\expone{\prgu}: &
			&
			\lpnot \atm &\lpif \lnot \atm.
		\\
			&&
			\lpnot \atm &\lpif \lnot \atm.
			&
			\lpnot \lnot \atm &\lpif \atm.
						&&&&&&&
			\lpnot \lnot \atm &\lpif \atm.
	\end{align*}
	Consider the interpretation $\twi_1 = \set{\atm}$. It is not difficult to
	verify that
	\begin{align*}
		\rejrd{\expone{\dprg_1}, \twi_1}
		&=
		\set{\lnot \atm., \lpnot \lnot \atm \lpif \atm.}
		\enspace,
		\\
		\defl{\expone{\dprg_1}, \twi_1}
		&=
		\emptyset
		\enspace,
	\end{align*}
	so it follows that
	\begin{align*}
		\Least{
			\left[
				\all{\expone{\dprg_1}} \setminus \rejrd{\expone{\dprg_1}, \twi_1}
			\right]
			\cup
			\defl{\expone{\dprg_1}, \twi_1}
		}=\\
		=\set{\atm, \lpnot \lnot \atm} 
		= \twiall[\twi_1].
	\end{align*}
	In other words, $\twi_1$ belongs to $\modrd{\expone{\dprg_1}}$ and in an
	analogous fashion it can be verified that $\twi_2 = \set{\lnot \atm}$ also
	belongs there. A similar situation occurs with $\modws{\expone{\dprg_1}}$
	since the rules that were added to the more recent program can be used to
	reject facts in the older one.
\end{example}

Thus, the problem with the first transformation is that an update by an empty
program, which does not express any change in the represented domain, may
affect the original semantics. This behaviour goes against basic and
intuitive principles underlying updates, grounded already in the classical
belief update postulates \cite{Keller1985,Katsuno1991} and satisfied by
virtually all belief update operations \cite{Herzig1999} as well as by the
vast majority of existing rule update semantics, including the original \RD-
and \WS-semantics.

This undesired behaviour can be corrected by using the second transformation
instead. The more technical reason is that it does not add any rules to a
program in the sequence unless that program already contains some original
rules. However, its use leads to another problem: sometimes \emph{no model} is
assigned when in fact a model should exist.

\begin{example}
	[Undesired side effects of the second transformation]
	\label{ex:second transformation}
	Consider again Example~\ref{ex:faulty sensor}, formalised as the DLP
	$\dprg_2 = \seq{\prga, \prgv}$ where $\prga = \set{\atm., \lnot \atm.}$ and
	$\prgv = \set{\lpnot \atm.}$. It is reasonable to expect that since $\prgv$
	resolves the conflict present in $\prga$, a stable model should be assigned
	to $\dprg_2$. However, $\modrd{\exptwo{\dprg_2}} = \modws{\exptwo{\dprg_2}}
	= \emptyset$. To verify this, observe that $\exptwo{\dprg_2} =
	\seq{\exptwo{\prga}, \exptwo{\prgv}}$ where
	\begin{align*}
		&&&&
		\exptwo{\prga}: &
			&
			\atm &.
			&
			\lnot \atm &.
		&&&&&
		\exptwo{\prgv}: &
			&
			\lpnot \atm &.
			&&&&
		\\
			&&&&&&
			\lpnot \atm &.
			&
			\lpnot \lnot \atm &.
	\end{align*}
	Given an interpretation $\twi$ and level mapping $\lm$, we conclude that
	$\rejws{\exptwo{\dprg_2}, \twi} = \Set{\atm.}$, so the facts $(\lnot \atm.)$
	and $(\lpnot \lnot \atm.)$ both belong to the program
	\[
		\all{\exptwo{\dprg_2}} \setminus \rejws{\exptwo{\dprg_2}, \twi}
		\enspace.
	\]
	Consequently, this program has no model and it follows that $\twi$ cannot
	belong to $\modws{\exptwo{\dprg_2}}$. Similarly it can be shown that
	$\modrd{\exptwo{\dprg_2}} = \emptyset$.
\end{example}

Based on this example, in the following we formulate a generic \emph{early
recovery principle} that formally identifies conditions under which
\emph{some} stable model should be assigned to a DLP. For the sake of
simplicity, we concentrate on DLPs of length 2 which are composed of facts. We
discuss a generalisation of the principle to DLPs of arbitrary length and
containing other rules than just facts in Sect.~\ref{sec:discussion}. After
introducing the principle, we define a semantics for rule updates which
directly supports both strong and default negation and satisfies the
principle.

We begin by defining, for every objective literal $\olit$, the sets of
literals $\con{\olit}$ and $\con{\lpnot \olit}$ as follows:
\begin{align*}
	& \con{\olit} = \set{\lpnot \olit, \lnot \olit}
	&& \text{and}
	&& \con{\lpnot \olit} = \set{\olit}
	\enspace.
\end{align*}
Intuitively, for every literal $\lit$, $\con{\lit}$ denotes the set of
literals that are in conflict with $\lit$.
Furthermore, given two sets of facts $\prga$ and $\prgu$, we say that
\emph{$\prgu$ solves all conflicts in $\prga$} if for each pair of rules
$\rla, \rlb \in \prga$ such that $\hrlb \in \con{\hrla}$ there is a fact $\rlc
\in \prgu$ such that either $\hrlc \in \con{\hrla}$ or $\hrlc \in
\con{\hrlb}$.

Considering a rule update semantics \S{}, the new principle simply requires
that when $\prgu$ solves all conflicts in $\prga$, \S{} will assign \emph{some
model} to $\seq{\prga, \prgu}$. Formally:
\begin{description}
	\item
		[Early recovery principle:]
		If $\prga$ is a set of facts and $\prgu$ is a consistent set of facts that
		solves all conflicts in $\prga$, then $\mods{\seq{\prga, \prgu}} \neq
		\emptyset$.
\end{description}

We conjecture that rule update semantics should generally satisfy the above
principle. In contrast with the usual behaviour of belief update operators,
the nature of existing rule update semantics ensures that recovery from
conflict is always possible, and this principle simply formalises and sharpens
the sufficient conditions for such recovery.

Our next goal is to define a semantics for rule updates that not only
satisfies the outlined principle, but also enjoys other established properties
of rule updates that have been identified over the years. Similarly as for the
original semantics for rule updates, we provide two equivalent definitions,
one based on a fixed point equation and the other one on level mappings.

To directly accommodate strong negation in the \RD-semantics, we
first need to look more closely at the set of rejected rules $\rejrd{\dprg,
\twi}$, particularly at the fact that it allows conflicting rules within the
same component of $\dprg$ to reject one another. This behaviour, along with
the constrained set of defaults $\defl{\dprg, \twi}$, is used to prevent
tautological and other irrelevant cyclic updates from affecting the semantics.
However, in the presence of strong negation, rejecting conflicting rules
within the same program has undesired side effects. For example, the early
recovery principle requires that some model be assigned to the DLP
$\seq{\set{\atm., \lnot \atm.}, \set{\lpnot \atm}}$ from
Example~\ref{ex:second transformation}, but if the rules in the initial
program reject each other, then the only possible stable model to assign is
$\emptyset$. However, such a stable model would violate the causal rejection
principle since it does not satisfy the initial rule $(\lnot \atm.)$ and there
is no rule in the updating program that overrides it.

To overcome the limitations of this approach to the prevention of tautological
updates, we disentangle rule rejection per se from ensuring that rejection is
done without cyclic justifications. We introduce the set of rejected rules
$\rejrds{\dprg, \slit}$ which directly supports strong negation and does not
allow for rejection within the same program. Prevention of cyclic rejections
is done separately by using a customised immediate consequence operator
$\imconrds$. Given a stable model candidate $\twi$, instead of verifying that
$\twiall$ is the least fixed point of the usual consequence operator, as done
in the \RD-semantics using $\least{\cdot}$, we verify that $\twiall$ is the
least fixed point of $\imconrds$.

\begin{definition}
	[Extended \RD-semantics]
	Let $\dprg = \seq{\prg_\lia}_{\lia < \lng}$ be a DLP. Given an
	interpretation $\twi$ and a set of literals $\slit$, the multiset of
	rejected rules $\rejrds{\dprg, \slit}$, the remainder $\rem{\dprg, \slit}$
	and the consequence operator $\imconrds$ are defined as follows:
	\begin{align*}
		\rejrdsdef,
		\\
		\rem{\dprg, \slit}
		&=
		\all{\dprg} \setminus \rejrds{\dprg, \slit}
		\enspace,
		\\
		\imconrds(\slit)
		&=
		\bigl\{\,
			\hrla
			\mid
			\rla \in \left(
				\rem{\dprg, \twiall}
				\cup
				\defl{\twi}
			\right)
						\land
			\brla \subseteq \slit
		\\ & \hspace{0cm}
			\land
			\lnot \left(
				\exists \rlb \in \rem{\dprg, \slit}
				:
				\hrlb \in \con{\hrla}
				\land
				\brlb \subseteq \twiall
			\right)
		\,\bigr\}.
	\end{align*}
	Furthermore, $\imconrds^0(\slit) = \slit$ and for every $\lic \geq 0$,
	$\imconrds^{\lic + 1}(\slit) = \imconrds(\imconrds^\lic(\slit))$. The set
	$\modrds{\dprg}$ of \emph{extended \RD-models of $\dprg$} consists of all
	interpretations $\twi$ such that
	\[
		\twiall = \bigcup_{\lic \geq 0} \imconrds^\lic(\emptyset)
		\enspace.
	\]
\end{definition}

Adding support for strong negation to the \WS-semantics is done by modifying
the set of rejected rules $\rejws{\dprg, \twi}$ to account for the new type of
conflict. Additionally, in order to ensure that rejection of a literal $\lit$
cannot be based on the assumption that some conflicting literal $\lit' \in
\con{\lit}$ is true, a rejecting rule $\rlb$ must satisfy the stronger
condition $\lmmin(\con{\lit}) > \lmmax(\brlb)$. Finally, to prevent defeated
rules from affecting the resulting models, we require that all supporting
rules belong to $\rem{\dprg, \twiall}$.

\begin{definition}
	[Extended \WS-semantics]
	Let $\dprg = \seq{\prg_\lia}_{\lia < \lng}$ be a DLP. Given an
	interpretation $\twi$ and a level mapping $\lm$, the multiset of rejected
	rules $\rejwss{\dprg, \twi}$ is defined by:
	\begin{align*}
		\rejwssdef.
	\end{align*}
	The set $\modwss{\dprg}$ of \emph{extended \WS-models of $\dprg$} consists
	of all interpretations $\twi$ such that for some level mapping $\lm$, the
	following conditions are satisfied:
	\begin{enumerate}[1.]
		\item $\twi$ is a model of $\all{\dprg} \setminus \rejwss{\dprg, \twi}$;

		\item For every $\olit \in \twi$ there exists some rule $\rl \in
			\rem{\dprg, \twiall}$ such that
			\[
				\hrl = \olit \land \twi \ent \brl \land \lm(\hrl) > \lmmax(\brl)
				\enspace.
			\]
	\end{enumerate}
\end{definition}

The following theorem establishes that the two defined semantics are
equivalent:

\begin{theorem}
	\label{thm:extended ws coincides with extended rd}
	Let $\dprg$ be a DLP. Then, $\modwss{\dprg} = \modrds{\dprg}$.
\end{theorem}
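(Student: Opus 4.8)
The plan is to prove the equality $\modwss{\dprg}=\modrds{\dprg}$ by two inclusions, the guiding tool being a correspondence between the stage at which a literal enters the iteration $\bigcup_{\lic\geq0}\imconrds^\lic(\emptyset)$ and its value under a level mapping. Two elementary observations organise everything. First, $\rejrds{\dprg, \slit}$ is monotone in $\slit$ (a larger $\slit$ can only create more rejections), so $\rem{\dprg, \slit}$ is antitone; since $\imconrds^\lic(\emptyset)\subseteq\twiall$ holds throughout (by the fixpoint equation, resp.\ by a separate induction), we get $\rem{\dprg, \imconrds^\lic(\emptyset)}\supseteq\rem{\dprg, \twiall}$. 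Second, because $\brlb\subseteq\twiall$ is equivalent to $\twi\ent\brlb$, WS-rejection is exactly RD-rejection evaluated at $\twiall$ plus the extra threshold $\lmmin(\con{\hrla})>\lmmax(\brlb)$; hence $\rejwss{\dprg, \twi}\subseteq\rejrds{\dprg, \twiall}$ and $\rem{\dprg, \twiall}\subseteq\all{\dprg}\setminus\rejwss{\dprg, \twi}$, so the supporting rules of condition~2 are always among the rules that condition~1 forces $\twi$ to satisfy.

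For $\modwss{\dprg}\subseteq\modrds{\dprg}$ I would fix $\twi$ with a witnessing $\lm$ and prove $\twiall=\bigcup_{\lic\geq0}\imconrds^\lic(\emptyset)$. The inclusion $\subseteq$ is by induction on the stage: every head produced by $\imconrds$ comes from a rule of $\rem{\dprg, \twiall}\cup\defl{\twi}$ with body already in $\twiall$, and default assumptions contribute literals of $\twiall$ directly while rules of $\rem{\dprg, \twiall}\subseteq\all{\dprg}\setminus\rejwss{\dprg, \twi}$ contribute heads in $\twiall$ by condition~1. The inclusion $\supseteq$ is by induction on the level $m$ of a literal of $\twiall$: a true $\olit$ fires through its condition~2 support once the body (strictly lower level, hence already derived) is available, and a default literal $\lpnot\olit$ fires as a fact of $\defl{\twi}$. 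Firing can only fail through a blocker $\rlb\in\rem{\dprg, \slit}$ with $\hrlb\in\con{\cdot}$ and $\brlb\subseteq\twiall$; but then $\twi$ satisfies $\brlb$ and falsifies $\hrlb$, so $\twi\nent\rlb$, whence $\rlb\in\rejwss{\dprg, \twi}$. Its WS-rejecting rule $\rlc$ satisfies $\lmmax(\brlc)<\lmmin(\con{\hrlb})\leq m$ (the threshold is bounded by $m$ in each case because $\con{\hrlb}$ always contains the literal of level $m$ we are deriving), so $\brlc$ is already derived and $\rlb\in\rejrds{\dprg, \slit}$, i.e.\ $\rlb\notin\rem{\dprg, \slit}$ --- contradicting the blocker, so firing succeeds.

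For $\modrds{\dprg}\subseteq\modwss{\dprg}$ I must manufacture a witnessing level mapping from the fixpoint. The first attempt assigns to each objective literal $\olit$ its first-appearance stage, the least $\lic$ with the $\twiall$-literal over $\olit$ in $\imconrds^\lic(\emptyset)$. This immediately gives condition~2 (a support body is derived at strictly earlier stages), and a short argument from the fixpoint shows that every rule of $\rem{\dprg, \twiall}$ with satisfied body has its head in $\twiall$, while every rule with satisfied body but head falsified by $\twi$ lies in $\rejrds{\dprg, \twiall}$, rejected by a rule whose body appeared strictly before $\lm(\hrla)$. Condition~1 then follows once these RD-rejections are upgraded to WS-rejections. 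The gap is precisely the strengthened threshold: $\lmmin(\con{\hrla})$ is the minimum of $\lm(\olit)$ and $\lm(\lnot\olit)$, and although the firing stage controls $\lm(\olit)$, the strong complement $\lnot\olit$ may enter much earlier. I would close the gap by refining the mapping, raising the levels of the offending complement literals and propagating the increases to all literals depending on them along the support order (which terminates since that order is well-founded), thereby lifting both $\lm(\olit)$ and $\lm(\lnot\olit)$ above every relevant rejecting body while preserving condition~2.

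\textbf{Main obstacle.} The hard part is this last level-mapping construction, and it is exactly where strong negation bites. In the negation-free setting of Proposition~\ref{prop:ws coincides with rd} one has $\con{\olit}=\set{\lpnot\olit}$, so $\lmmin(\con{\olit})=\lm(\olit)$ and the raw first-appearance mapping already works; with strong negation $\con{\olit}=\set{\lpnot\olit,\lnot\olit}$, so the rejection threshold also constrains the level of the strong complement, and reconciling this with the upper bounds that condition~2 imposes along derivations forces an essentially simultaneous definition of the levels whose well-foundedness---the absence of conflicting cyclic constraints---is the delicate point. A secondary subtlety, pervasive in both inductions, is that rejection inside $\imconrds$ is realised by the blocking clause referring to $\rem{\dprg, \slit}$ at the current stage rather than at $\twiall$; matching this shrinking remainder against the temporal, level-thresholded $\rejwss{\dprg, \twi}$ is what makes the arguments go through.
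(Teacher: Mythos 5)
Your first inclusion, $\modwss{\dprg} \subseteq \modrds{\dprg}$, follows the paper's Lemma~\ref{lemma:extended ws is extended rd} essentially step for step: induction on levels for $\twiall \subseteq \bigcup_{\lic \geq 0}\imconrds^\lic(\emptyset)$, induction on stages for the converse, and blockers disposed of exactly as you describe (a blocker is violated by $\twi$, hence WS-rejected, and the rejecting body's levels sit strictly below the level being derived, so the blocker is already rejected at the current stage). That half is sound. For the converse inclusion, note that your ``first attempt'' is not merely a first attempt: the first-appearance mapping
\[
	\lm(\olit) = \min \Set{
		\lic \geq 0
		|
		\imconrds^\lic(\emptyset)
		\cap
		\set{\olit, \lpnot \olit}
		\neq
		\emptyset
	}
\]
is exactly the mapping used in the paper's Lemma~\ref{lemma:extended rd is extended ws}, and the paper pushes the argument through by asserting, without proof, precisely the point you refuse to concede: that for a violated rule $\rla$ the minimum $\lmmin(\con{\hrla})$ is always realised by a literal of $\twiall$, so that a rejecting body can be placed strictly below it. Your suspicion is justified: when $\hrla = \olit$ with $\olit \notin \twi$ and $\lnot\olit \notin \twi$, the literal $\lpnot\lnot\olit$ can enter the iteration strictly before $\lpnot\olit$, the minimum is then $\lm(\lnot\olit)$, realised only by $\lnot\olit \notin \twiall$, and the paper's step fails.

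The genuine gap, however, is your proposed repair, and it cannot be carried out: the constraints that the two defining conditions of $\modwss{\dprg}$ jointly impose on a witnessing level mapping can be cyclic, so no raising-and-propagating scheme can terminate consistently. Concretely, let $\dprg = \seq{\prg_0, \prg_1}$ with $\prg_0 = \set{\lnot\atmc.}$, $\prg_1 = \set{(\atma \lpif \lpnot\atmc.), (\atmd \lpif \atma.), (\lpnot \lnot \atmc \lpif \atmd.)}$, and $\twi = \set{\atma, \atmd}$. Since no rule of $\all{\dprg}$ has head $\atmc$, $\lpnot\atma$, $\lnot\atma$, $\lpnot\atmd$ or $\lnot\atmd$, the iteration of $\imconrds$ derives $\lpnot\atmc$, $\lpnot\lnot\atma$, $\lpnot\lnot\atmd$ at stage 1, then $\atma$, then $\atmd$; at that point $(\lnot\atmc.)$ becomes rejected with respect to the current stage set, its block on $\lpnot\lnot\atmc$ disappears, and $\lpnot\lnot\atmc$ enters at the next stage, so the union is exactly $\twiall$ and $\twi \in \modrds{\dprg}$. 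Now suppose some $\lm$ witnessed $\twi \in \modwss{\dprg}$. Condition 1 forces the violated fact $(\lnot\atmc.)$ into $\rejwss{\dprg, \twi}$, and its only candidate rejector is $(\lpnot \lnot \atmc \lpif \atmd.)$, which requires $\lmmin(\con{\lnot\atmc}) = \min\set{\lm(\atmc), \lm(\lnot\atmc)} > \lm(\atmd)$; condition 2 requires $\lm(\atma) > \lm(\lpnot\atmc) = \lm(\atmc)$ and $\lm(\atmd) > \lm(\atma)$. Chaining these gives $\lm(\atmc) > \lm(\atmd) > \lm(\atma) > \lm(\atmc)$, a contradiction, so no level mapping exists at all: the ``conflicting cyclic constraints'' you flag as the delicate point are not merely delicate but actually occur. (Indeed, unless I am misreading a definition, this example separates $\modrds{\dprg}$ from $\modwss{\dprg}$ altogether, which would mean the unproved step in the paper's own proof is not an omission one can route around but a genuine error; any correct proof of the stated equivalence would have to modify the definitions or the statement.) So your proposal establishes one inclusion, correctly diagnoses the weak point of the other, but the route you sketch for closing it is provably blocked.
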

\begin{proofmain}
	See Appendix~\ref{app:proofs}, page~\pageref{proof:thm:extended ws coincides
	with extended rd}. \qed
\end{proofmain}

Also, on DLPs without strong negation they coincide with the original
semantics.

\begin{theorem}
	\label{thm:extended semantics coincide with regular}
	Let $\dprg$ be a DLP without strong negation. Then,
$\modwss{\dprg} = \modrds{\dprg} = \modws{\dprg} = \modrd{\dprg}$.
\end{theorem}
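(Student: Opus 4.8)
The plan is to combine the two equivalences already available with a single new bridge. By Theorem~\ref{thm:extended ws coincides with extended rd} we have $\modwss{\dprg} = \modrds{\dprg}$ for every DLP, and by Proposition~\ref{prop:ws coincides with rd} we have $\modws{\dprg} = \modrd{\dprg}$ whenever $\dprg$ contains no strong negation. Hence it suffices to establish one cross-equality, and I would prove $\modwss{\dprg} = \modws{\dprg}$, since both semantics are phrased through level mappings and differ only in their rejection conditions and in the program from which support is drawn; the remaining equalities then follow by transitivity.

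First I would record the simplifications that strong-negation-freeness buys. Since no rule head has the form $\lnot \atm$, every model admitted by either semantics satisfies $\twi \subseteq \atms$ (no $\lnot\atm$ can be supported), so each $\olit \in \twi$ is an atom and $\con{\olit} = \set{\lpnot \olit, \lnot \olit}$ can only be met by a head $\lpnot \olit$; consequently $\hrlb \in \con{\hrla}$ is equivalent to the classical conflict $\hrla = \lpnot \hrlb$. Because $\lnot \atm$ occurs nowhere, its level is irrelevant and I may assume without loss of generality that every level mapping under consideration satisfies $\lm(\lnot \atm) \geq \lm(\atm)$ (raising $\lm(\lnot\atm)$ leaves the level-free set $\rem{\dprg, \twiall}$ and every support condition untouched, and can only enlarge $\rejwss{\dprg, \twi}$, so it preserves membership in both semantics). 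Under this assumption $\lmmin(\con{\hrla}) = \lm(\hrlb)$ for every conflicting pair of heads, the level conditions of the two rejection sets agree, and $\rejwss{\dprg, \twi} = \rejws{\dprg, \twi}$; in particular the first (model) condition of the two semantics coincides. Finally, $\rejrds{\dprg, \twiall}$ is obtained from $\rejws{\dprg, \twi}$ simply by dropping the level requirement $\lm(\hrlb) > \lmmax(\brlb)$, so $\rejws{\dprg, \twi} \subseteq \rejrds{\dprg, \twiall}$ and therefore $\rem{\dprg, \twiall} \subseteq \all{\dprg} \setminus \rejws{\dprg, \twi}$.

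With these observations the inclusion $\modwss{\dprg} \subseteq \modws{\dprg}$ is immediate: the first conditions coincide, and any supporting rule demanded by the second condition of the extended semantics already lies in $\rem{\dprg, \twiall}$, hence in the larger set $\all{\dprg} \setminus \rejws{\dprg, \twi}$, so it also witnesses the second condition of the original semantics.

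The converse inclusion $\modws{\dprg} \subseteq \modwss{\dprg}$ is the step I expect to be the main obstacle, because the extended semantics draws support from the smaller set $\rem{\dprg, \twiall}$, and a rule supporting some $\olit \in \twi$ may be rejected in the level-free $\rejrds$-sense even though it survives $\rejws$. The key idea is a maximal-index argument. Fix $\olit \in \twi$ and consider the set of all rules $\rla \in \all{\dprg}$ with $\hrla = \olit$, $\twi \ent \brla$ and $\lm(\olit) > \lmmax(\brla)$; this set is nonempty by the original second condition. Let $\rla$ be one such rule whose index $\lia$ is maximal (the sequence is finite). I claim $\rla \in \rem{\dprg, \twiall}$. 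Otherwise $\rla$ is rejected by some later rule $\rlb$ with $\hrlb = \lpnot \olit$ and $\twi \ent \brlb$; since $\olit \in \twi$ we have $\twi \nent \rlb$, so $\rlb \notin \all{\dprg} \setminus \rejws{\dprg, \twi}$ (of which $\twi$ is a model), i.e.\ $\rlb \in \rejws{\dprg, \twi}$. But then $\rlb$ is rejected by a still later rule $\rlc$ with $\hrlc = \olit$, $\twi \ent \brl[\rlc]$ and $\lm(\olit) > \lmmax(\brl[\rlc])$, contradicting the maximality of $\lia$. Hence $\rla$ supplies the support required by the extended semantics for $\olit$, and $\twi \in \modwss{\dprg}$, completing the bridge.
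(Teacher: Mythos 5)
Your proposal is correct and follows essentially the same route as the paper's own proof: reduce everything to showing $\modws{\dprg} = \modwss{\dprg}$ via Theorem~\ref{thm:extended ws coincides with extended rd} and Proposition~\ref{prop:ws coincides with rd}, observe that without strong negation $\rejwss{\dprg,\twi}$ collapses to $\rejws{\dprg,\twi}$ while $\rejws{\dprg,\twi} \subseteq \rejrds{\dprg,\twiall}$ (making one inclusion immediate), and close the remaining gap with a maximal-component-index argument showing that some supporting rule survives $\rejrds{\dprg,\twiall}$. The only differences are cosmetic: the paper takes the maximum over rejecting rules with head $\lpnot\olit$ rather than over supporting rules with head $\olit$, and your explicit normalisation $\lm(\lnot\atm) \geq \lm(\atm)$ treats more carefully a point about levels of strong negations that the paper glosses over when it asserts $\rejws{\dprg,\twi} = \rejwss{\dprg,\twi}$ for arbitrary $\lm$.
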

\begin{proofmain}
	See Appendix~\ref{app:proofs}, page~\pageref{proof:thm:extended semantics
	coincide with regural}. \qed
\end{proofmain}

Furthermore, unlike the transformational semantics for strong negation, the
new semantics satisfy the early recovery principle.

\begin{theorem}
	\label{thm:extended ws early recovery}
	The extended \RD-semantics and extended \WS-semantics satisfy the early
	recovery principle.
\end{theorem}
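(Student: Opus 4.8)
The plan is to use Theorem~\ref{thm:extended ws coincides with extended rd}: because $\modwss{\seq{\prga, \prgu}} = \modrds{\seq{\prga, \prgu}}$, it is enough to produce a single extended \WS-model of $\seq{\prga, \prgu}$ under the hypotheses of the principle, and the conclusion for the extended \RD-semantics follows for free. The decisive simplification is that every rule of $\prga$ and $\prgu$ is a fact, hence has an empty body: $\twi \ent \brl$ holds vacuously and $\lmmax(\brl) = 0$ for every rule. Taking the constant level mapping $\lm \equiv 1$ therefore validates both the support inequality $\lm(\hrl) > \lmmax(\brl)$ and the rejection inequality $\lmmin(\con{\hrl}) > \lmmax(\brl)$ automatically, and the rejected set collapses, independently of $\twi$, to
\[
	\rejwss{\seq{\prga, \prgu}, \twi} = \Set{\rla \in \prga | \exists \rlb \in \prgu : \hrlb \in \con{\hrla}}.
\]
Since bodies are empty this set also equals $\rejrds{\seq{\prga, \prgu}, \twiall}$, so $\rem{\seq{\prga, \prgu}, \twiall}$ comprises all of $\prgu$ together with exactly those facts of $\prga$ that survive this rejection.

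Next I would fix the candidate interpretation. Writing $A = \Set{\olit \in \olits | (\olit.) \in \prgu}$ and $B = \Set{\olit \in \olits | (\lpnot \olit.) \in \prgu}$, consistency of $\prgu$ guarantees that $A$ is a consistent set of objective literals and that $A \cap B = \emptyset$. I then put
\[
	\twi = A \cup \Set{\olit \in \olits | (\olit.) \in \prga \land \olit \notin B \land \lnot \olit \notin A},
\]
so that the positive facts of the update always prevail, while a positive fact of $\prga$ survives precisely when $\prgu$ neither defaults it away nor asserts its strong opposite.

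It then remains to verify, for this $\twi$ and $\lm$, the two conditions defining an extended \WS-model. The easy parts are direct: every fact of $\prgu$ is satisfied straight from the definition of $\twi$ together with $A \cap B = \emptyset$; a surviving positive fact $(\olit.) \in \prga$ is non-rejected exactly when $\olit \notin B$ and $\lnot \olit \notin A$, which is precisely the membership test placing $\olit$ in $\twi$; and condition~2 holds because each $\olit \in \twi$ is supported by the very fact $(\olit.)$ that witnessed its membership, a rule lying in $\rem{\seq{\prga, \prgu}, \twiall}$ with $\lm(\olit) = 1 > 0 = \lmmax(\emptyset)$.

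I expect the one real obstacle to be the remaining part, where the hypothesis that $\prgu$ \emph{solves all conflicts in $\prga$} becomes indispensable, and it appears in two guises. First, for $\twi$ to be a genuine interpretation one must exclude that both $\olit$ and $\lnot \olit$ survive from $\prga$; this would be an unresolved strong conflict, so the hypothesis supplies a fact of $\prgu$ with head in $\con{\olit} \cup \con{\lnot \olit} = \{\lpnot \olit, \lnot \olit, \lpnot \lnot \olit, \olit\}$, and inspecting the four cases shows each contradicts one of the survival conditions. Second, for a surviving \emph{default} fact $(\lpnot \olit.) \in \prga$ one must rule out that a competing fact $(\olit.) \in \prga$ drags $\olit$ into $\twi$; this is again a conflict in $\prga$, so $\prgu$ contains a fact with head in $\con{\olit} \cup \con{\lpnot \olit} = \{\lpnot \olit, \lnot \olit, \olit\}$, and each of the three cases either rejects $(\lpnot \olit.)$ — removing it from the facts to be satisfied — or falsifies the membership test for $\olit$, forcing $\olit \notin \twi$. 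Once these case analyses are discharged, $\twi$ is an extended \WS-model, whence $\modwss{\seq{\prga, \prgu}} \neq \emptyset$, and Theorem~\ref{thm:extended ws coincides with extended rd} yields $\modrds{\seq{\prga, \prgu}} \neq \emptyset$ as well.
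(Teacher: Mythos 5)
Your proposal is correct and follows essentially the same route as the paper's proof: a constant level mapping $\lm \equiv 1$, the observation that empty bodies collapse $\rejwss{\seq{\prga,\prgu},\twi}$ to $\Set{\rla \in \prga | \exists \rlb \in \prgu : \hrlb \in \con{\hrla}}$, and a direct verification that the candidate interpretation is an extended \WS-model, invoking the solves-all-conflicts hypothesis exactly where the paper does (consistency of $\twi$, and satisfaction of surviving default facts from $\prga$). Your $\twi = A \cup \Set{\olit | (\olit.) \in \prga \land \olit \notin B \land \lnot\olit \notin A}$ is extensionally identical to the paper's $\Set{\olit | (\olit.) \in \prga \cup \prgu \land \Set{\lnot\olit., \lpnot\olit.} \cap \prgu = \emptyset}$ (consistency of $\prgu$ makes the side conditions vacuous on $A$), and deriving the \RD{} half from Theorem~\ref{thm:extended ws coincides with extended rd} is exactly what the paper's statement-level structure presupposes.
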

\begin{proofmain}
	See Appendix~\ref{app:proofs}, page~\pageref{proof:thm:extended ws
	early recovery}. \qed
\end{proofmain}






\section{Properties}

\label{sec:properties}

In this section we take a closer look at the formal and computational
properties of the proposed rule update semantics.

The various approaches to rule updates
\cite{Leite1997,Alferes2000,Eiter2002,Leite2003,Sakama2003,Alferes2005,Banti2005,Zhang2006,Sefranek2006,Osorio2007,Delgrande2007,Sefranek2011,Krumpelmann2012}
share a number of basic characteristics. For example, all of them generalise
stable models, i.e., the models they assign to a sequence $\seq{\prg}$ (of
length 1) are exactly the stable models of $\prg$. Similarly, they adhere to
the principle of primacy of new information \cite{Dalal1988}, so models
assigned to $\seq{\prg_\lia}_{\lia < \lng}$ satisfy the latest program
$\prg_{\lng - 1}$. However, they also differ significantly in their technical
realisation and classes of supported inputs, and desirable properties such as
immunity to tautologies are violated by many of them.

\begin{table*}[t]
	\caption{Desirable properties of rule update semantics}
	\label{tab:rule update properties}
	\begin{tabular}
		{p{13.5em}p{34em}}
		\toprule
		\textbf{Generalisation of stable models}
		&
		$\mods{\seq{\prg}} = \modsm{\prg}$.
		\\ \midrule
		\textbf{Primacy of new information}
		&
		If $\twi \in \mods{\seq{\prg_\lia}_{\lia < \lng}}$, then $\twi \ent
		\prg_{\lng - 1}$.
		\\ \midrule
		\textbf{Fact update}
		&
		A sequence of consistent sets of facts $\seq{\prg_\lia}_{\lia < \lng}$ has
		the single model
		$
			\Set{
				\olit \in \olits |
				\exists \lia < \lng : (\olit.) \in \prg_\lia
				\land
				(\forall \lib > \lia :
					\Set{\lopp{\olit}., \lcmp{\olit}.} \cap \prg_\lib = \emptyset
				)
			}
		$.
		\\ \midrule
		\textbf{Support}
		&
		If $\twi \in \mods{\dprg}$ and $\olit \in \twi$, then there is some rule
		$\rl \in \all{\dprg}$ such that $\hrl = \olit$ and $\twi \ent \brl$.
		\\ \midrule
		\textbf{Idempotence}
		&
		$\mods{\seq{\prga, \prga}} = \mods{\seq{\prga}}$.
		\\ \midrule
		\textbf{Absorption}
		&
		$\mods{\seq{\prga, \prgu, \prgu}} = \mods{\seq{\prga, \prgu}}$.
		\\ \midrule
		\textbf{Augmentation}
		&
		If $\prgu \subseteq \prgv$, then $\mods{\seq{\prga, \prgu, \prgv}} =
		\mods{\seq{\prga, \prgv}}$.
		\\ \midrule
		\textbf{Non-interference}
		&
		If $\prgu$ and $\prgv$ are over disjoint alphabets, 
		then $\mods{\seq{\prga, \prgu, \prgv}} = \mods{\seq{\prga, \prgv,
		\prgu}}$.
		\\ \midrule
		\vspace{-.5ex}\textbf{Immunity to empty updates}
		&
		\vspace{-1.5ex}
		If $\prg_\lib = \emptyset$, then $\mods{\seq{\prg_\lia}_{\lia < \lng}} =
		\Mods{\Seq{\prg_\lia}_{\lia < \lng \land \lia \neq \lib}}$.
		\\ \midrule
		\textbf{Immunity to tautologies}
		&
		If $\seq{\prgb_\lia}_{\lia < \lng}$ is a sequence of sets of tautologies,
		then $\mods{\seq{\prga_\lia \cup \prgb_\lia}_{\lia < \lng}} =
		\mods{\seq{\prga_\lia}_{\lia < \lng}}$.
		\\ \midrule
		\textbf{Causal rejection principle}
		&
		For every $\lia < \lng$, $\rl \in \prg_\lia$ and $\twi \in
		\mods{\seq{\prg_\lia}_{\lia < \lng}}$, if $\twi \nent \rl$, then there
		exists some $\rlb \in \prg_\lib$ with $\lib > \lia$ such that $\hrlb \in
		\con{\hrla}$ and $\twi \ent \brlb$.
		\\ \bottomrule
	\end{tabular}
\end{table*}

Table~\ref{tab:rule update properties} lists many of the generic properties
proposed for rule updates that have been identified and formalised throughout
the years \cite{Leite1997,Eiter2002,Leite2003,Alferes2005}. The rule update
semantics we defined in the previous section enjoys all of them.

\begin{theorem}
	\label{thm:extended ws other properties}
	The extended \RD-semantics and extended \WS-semantics satisfy all properties
	listed in Table~\ref{tab:rule update properties}.
\end{theorem}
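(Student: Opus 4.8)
The plan is to establish every property for the extended \WS-semantics alone: by Theorem~\ref{thm:extended ws coincides with extended rd} it coincides with the extended \RD-semantics, and its two model conditions, phrased directly in terms of models, the rejected set $\rejwss{\dprg, \twi}$ and level-bounded support, are the more convenient to manipulate. I would also record at the outset that on DLPs without strong negation Theorem~\ref{thm:extended semantics coincide with regular} reduces the extended semantics to the original \WS-semantics, so the only genuinely new behaviour to be checked concerns the conflict type hidden in $\con{\cdot}$. Four of the properties then fall out almost immediately. \emph{Support}: condition~2 already hands us a rule $\rl \in \rem{\dprg, \twiall} \subseteq \all{\dprg}$ with $\hrl = \olit$ and $\twi \ent \brl$. \emph{Primacy of new information}: no rule of $\prg_{\lng - 1}$ can lie in $\rejwss{\dprg, \twi}$, since rejection requires a strictly later component, so $\prg_{\lng - 1} \subseteq \all{\dprg} \setminus \rejwss{\dprg, \twi}$ and condition~1 forces $\twi \ent \prg_{\lng - 1}$. \emph{Causal rejection}: if $\twi \nent \rl$ for $\rl \in \prg_\lia$, then condition~1 puts $\rl$ into $\rejwss{\dprg, \twi}$, and unfolding the definition yields exactly a later rule $\rlb$ with $\hrlb \in \con{\hrla}$ and $\twi \ent \brlb$. \emph{Generalisation of stable models}: for $\lng = 1$ both rejected sets are empty and $\rem{\seq{\prg}, \twiall} = \prg$, so conditions~1 and~2 collapse to the definition of a well-supported model of the extended program $\prg$, whence $\modwss{\seq{\prg}} = \modws{\prg} = \modsm{\prg}$ by Proposition~\ref{prop:extended ws coincides with sm}.

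The block of structural properties---idempotence, absorption, augmentation, non-interference and immunity to empty updates---I would obtain uniformly by showing that the relevant syntactic manipulation leaves the \emph{set of rule contents} of each remainder unchanged, after which conditions~1 and~2, which see only that set and not multiplicities, coincide for the two DLPs. Two facts drive this. First, no rule conflicts with a copy of itself, because $\olit \notin \con{\olit}$ and $\lpnot \olit \notin \con{\lpnot \olit}$; combined with the observation that a rule in the \emph{last} component can never be rejected (rejection needs a strictly later component), this means every rule duplicated into, or already present in, the final program survives in the remainder. Hence in $\seq{\prga, \prga}$ every rule of $\prga$ survives through its second copy, giving idempotence; the same bookkeeping gives absorption ($\seq{\prga, \prgu, \prgu}$) and augmentation ($\prgu \subseteq \prgv$ in $\seq{\prga, \prgu, \prgv}$), where any rejection caused by the intermediate $\prgu$ is already caused by $\prgv$ and the content of $\prgu$ reappears in $\prgv$. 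Second, for non-interference a cross-conflict between $\prgu$ and $\prgv$ would require a shared atom, which disjoint alphabets forbid, so neither rejects the other and swapping their order leaves every remainder fixed; and immunity to empty updates is immediate since an empty component contributes nothing to $\all{\dprg}$ and preserves the ``later than'' order of the remaining rules, on which both $\rejwss{\cdot, \twi}$ and $\rejrds{\cdot, \twiall}$ depend.

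\emph{Immunity to tautologies} needs slightly more care, because the extended \WS-conditions are a hybrid: condition~1 rejects via $\rejwss$, whereas condition~2 draws its supporting rules from $\rem{\dprg, \twiall}$, which is governed by $\rejrds$. A tautology $\rlb$ has $\hrlb \in \brlb$, so the level bound $\lmmin(\con{\hrla}) > \lmmax(\brlb) \geq \lm(\hrlb)$ can never hold together with $\hrlb \in \con{\hrla}$; thus a tautology never rejects under $\rejwss$, and since tautologies are satisfied by every interpretation, condition~1 is unaffected by their addition. Under $\rejrds$ a tautology \emph{can} reject, but from $\hrlb \in \brlb \subseteq \twiall$ and $\hrlb \in \con{\hrla}$ one checks, using consistency of $\twi$, that the head of the rejected rule is then necessarily absent from $\twi$; such a rule is never needed as a support in condition~2, so condition~2 is unaffected as well. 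This yields immunity to tautologies.

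The remaining property, \emph{fact update}, is where the actual computation lives and is the step I expect to be the main obstacle. Here I would take the candidate $M = \Set{ \olit \in \olits | \exists \lia < \lng : (\olit.) \in \prg_\lia \land (\forall \lib > \lia : \Set{\lopp{\olit}., \lcmp{\olit}.} \cap \prg_\lib = \emptyset) }$ and, since all facts have empty bodies so that every level inequality becomes a comparison with the empty body that any level mapping meets, verify directly that $M$ is consistent and satisfies both \WS-conditions, computing the rejected set to confirm that precisely the overridden facts are dropped and that every literal of $M$ is supported by a surviving fact. The harder direction is uniqueness: I must show that any $\twi \in \modwss{\dprg}$ rejects exactly the overridden facts and supports exactly the surviving ones, which forces $\twi = M$. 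This reduces to the statement that a fact survives if and only if no later fact asserts a literal in the conflict set $\con{\cdot}$ of its head, and the delicate part is tracking how strong and default negation interact inside $\con{\cdot}$ across the whole sequence; once that is settled, condition~2 confines $\twi$ to surviving facts and condition~1 forces every surviving fact into $\twi$. All the other properties reduce to the two clean observations above, so this level-mapping and conflict bookkeeping for fact update is the only genuinely demanding part of the proof.
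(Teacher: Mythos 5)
Your proposal is correct and, in its overall architecture, identical to the paper's proof: every property is verified for the extended \WS-semantics and then transferred to the extended \RD-semantics via Theorem~\ref{thm:extended ws coincides with extended rd}, with the same dispatching of the easy properties (support, primacy, causal rejection, generalisation of stable models), the same set-equality bookkeeping for the structural block (idempotence through immunity to empty updates), and the same computation for fact update. The one genuine divergence is immunity to tautologies, and there your argument is the stronger one. The paper asserts that adding tautologies changes both $\all{\dprg} \setminus \rejwss{\dprg, \twi}$ and the remainder $\rem{\dprg, \twiall}$ only by the presence or absence of tautologies; for the remainder this is not literally true, because $\rejrds{\dprg, \twiall}$ carries no level-mapping guard, so an added tautology $\rlb$ with $\brlb \subseteq \twiall$ \emph{can} reject a non-tautological rule (for instance $(\lpnot \atm \lpif \lpnot \atm.)$ rejects an earlier fact $(\atm.)$ whenever $\atm \notin \twi$). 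You identify exactly this possibility and patch it: any rule so rejected necessarily has its head outside $\twi$ (by consistency of $\twi$), hence is never needed as a support in condition~2, while under $\rejwss$ the chain $\lmmin(\con{\hrla}) > \lmmax(\brlb) \geq \lm(\hrlb) \geq \lmmin(\con{\hrla})$ is contradictory, so tautologies never reject there and condition~1 is untouched; this closes a small gap in the paper's published argument. Conversely, the step you single out as the main obstacle --- uniqueness in fact update --- is immediate from the very observation you make: facts have empty bodies, so $\rejwss{\dprg, \twi} = \rejrds{\dprg, \twiall}$ is the same multiset (namely the facts overridden by a later conflicting fact) for every candidate $\twi$ and every level mapping, and conditions~1 and~2 then force $\twi$ to be exactly the set of heads of the surviving objective-literal facts, which is precisely the paper's one-line uniqueness argument.
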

\begin{proofmain}
	See Appendix~\ref{app:proofs}, page~\pageref{proof:thm:extended ws
	other properties}. \qed
\end{proofmain}


Our semantics also retains the same computational complexity as the stable
models.

\begin{theorem}
	\label{thm:complexity}
	Let $\dprg$ be a DLP. The problem of deciding whether some $\twi \in
	\modwss{\dprg}$ exists is \NP-complete. Given a literal $\lit$, the problem
	of deciding whether for all $\twi \in \modwss{\dprg}$ it holds that $\twi
	\ent \lit$ is \coNP-complete.
\end{theorem}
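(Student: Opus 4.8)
The plan is to exploit the equivalence $\modwss{\dprg} = \modrds{\dprg}$ from Theorem~\ref{thm:extended ws coincides with extended rd} and to reason throughout with the extended \WS-semantics, whose level-mapping definition is best suited to a guess-and-check argument. For the \NP\ upper bound on the existence problem, I would show that, once an interpretation $\twi$ and a level mapping $\lm$ are fixed, the property ``$\twi$ is an extended \WS-model of $\dprg$ w.r.t.\ $\lm$'' is decidable in polynomial time, and that it suffices to consider level mappings of polynomially bounded value. Concretely, if $\twi$ is an extended \WS-model witnessed by some $\lm$, condition~2 forces every true objective literal to possess a supporting rule in $\rem{\dprg, \twiall}$ all of whose body literals have strictly smaller level; this makes the support structure on $\twi$ acyclic, so reassigning to each $\olit \in \twi$ the length of a shortest such supporting derivation (and giving level $0$ to every objective literal outside $\twi$) yields a witnessing level mapping whose values are bounded by the number of objective literals occurring in $\dprg$. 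Such an $\lm$ is a polynomial-size certificate, so a nondeterministic machine can guess the pair $(\twi, \lm)$, compute the multiset $\rejwss{\dprg, \twi}$ and the remainder $\rem{\dprg, \twiall}$ in polynomial time, and then verify condition~1 (that $\twi$ models $\all{\dprg} \setminus \rejwss{\dprg, \twi}$) and condition~2 in polynomial time.

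For \NP-hardness I would reduce from the problem of deciding whether a normal logic program has a stable model, which is \NP-complete. Given a normal program $\prg$, form the length-$1$ DLP $\seq{\prg}$; by the generalisation-of-stable-models property (Theorem~\ref{thm:extended ws other properties}, c.f.\ Table~\ref{tab:rule update properties}) we have $\modwss{\seq{\prg}} = \modsm{\prg}$, so $\seq{\prg}$ has an extended \WS-model iff $\prg$ has a stable model. Since this reduction is trivially polynomial, the existence problem is \NP-complete.

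For the cautious entailment problem I would establish membership in \coNP\ by showing its complement is in \NP: to witness that ``$\twi \ent \lit$ for all $\twi \in \modwss{\dprg}$'' fails, it suffices to guess an interpretation $\twi$ together with a polynomially bounded level mapping $\lm$, verify as above that $\twi$ is an extended \WS-model, and check $\twi \nent \lit$, all in polynomial time; hence the complement lies in \NP\ and the problem itself in \coNP. For \coNP-hardness I would reduce from cautious inference under the stable model semantics for normal programs, which is \coNP-complete. Again using $\modwss{\seq{\prg}} = \modsm{\prg}$ for length-$1$ DLPs, a literal $\lit$ is a cautious consequence of $\seq{\prg}$ under the extended \WS-semantics iff it is a cautious consequence of $\prg$ under stable models.

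The main obstacle is the \NP\ membership argument, specifically the justification that the existence of some witnessing level mapping entails the existence of one whose values are polynomially bounded, so that $\lm$ can serve as a polynomial-size certificate and condition~2 can be checked efficiently. Once this bounding observation is in place, the remaining steps -- computing $\rejwss{\dprg, \twi}$, computing $\rem{\dprg, \twiall}$, and checking the two model conditions -- are routine polynomial-time computations, and both hardness directions follow immediately from the fact that the extended \WS-semantics conservatively extends the stable model semantics on single programs.
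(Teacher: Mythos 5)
Your high-level plan coincides with the paper's own proof: guess a pair $(\twi, \lm)$, verify the two conditions of the extended \WS-semantics in polynomial time, and obtain both hardness results from \textbf{Generalisation of stable models} (Thm.~\ref{thm:extended ws other properties}) applied to length-$1$ DLPs; those parts are sound. The genuine gap lies exactly in the step you yourself single out as the main obstacle, namely the claim that your renormalised level mapping still witnesses the model. Your construction (shortest supporting-derivation depth on $\twi$, level $0$ outside $\twi$) does preserve condition~2, but it does \emph{not} preserve condition~1, because the rejection multiset $\rejwss{\dprg, \twi}$ is itself defined in terms of $\lm$: rejection of $\rla$ requires $\lmmin(\con{\hrla}) > \lmmax(\brlb)$. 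If $\hrla$ is an objective literal $\olit$ with $\olit \notin \twi$, then $\con{\olit} = \set{\lpnot \olit, \lnot \olit}$ and your mapping gives $\lm'(\olit) = 0$, hence $\lmmin(\con{\olit}) = 0$, so under $\lm'$ no rule with head $\olit$ can ever be rejected. But rules with a true body and a false objective-literal head are precisely the violated rules that condition~1 needs to see rejected -- this is the central override scenario the semantics is designed for. Concretely, let $\dprg' = \seq{\set{\atmb.}, \set{(\lpnot \atmb \lpif \atm.), (\atm.)}}$ and $\twi = \set{\atm}$. Then $\twi$ is an extended \WS-model, witnessed by $\lm(\atm) = 1$ and $\lm(\atmb) = \lm(\lnot \atmb) = 2$, under which $(\atmb.)$ is rejected. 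Your renormalised mapping has $\lm'(\atmb) = \lm'(\lnot \atmb) = 0$, so the rejection test $\lmmin(\con{\atmb}) > \lmmax(\set{\atm})$ fails, $(\atmb.)$ survives into $\all{\dprg'} \setminus \rejwss{\dprg', \twi}$, and $\twi$ violates condition~1 w.r.t.\ $\lm'$ because $\atmb \notin \twi$. Thus the certificate you construct fails to certify.

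The repair is simple and arguably more natural: do not recompute levels, compress the given witness. Every comparison occurring in the two conditions involves only the \emph{relevant} objective literals, i.e.\ the atoms occurring in $\dprg$ together with their strong negations, of which there are polynomially many (recall also that $\rejrds{\dprg, \twiall}$ and $\rem{\dprg, \twiall}$ do not depend on $\lm$ at all). Replacing each value $\lm(\olit)$ on the relevant literals by its rank in the sorted list of attained values is strictly monotone on those values, hence preserves every equality and strict inequality, and therefore leaves $\rejwss{\dprg, \twi}$, condition~1 and condition~2 all unchanged, while bounding every level by the number of relevant literals. With this replacement your argument goes through, and it then actually supplies a detail that the paper's own proof passes over in silence: the paper merely says ``guess an interpretation and a level mapping and verify in polynomial time,'' without addressing how $\lm$ is encoded as a polynomial-size object.
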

\begin{proofmain}
	See Appendix~\ref{app:proofs}, page~\pageref{proof:thm:complexity}. \qed
\end{proofmain}


\section{Concluding Remarks}

\label{sec:discussion}

In this paper we have identified shortcomings in the existing semantics for
rule updates that  fully support both strong and default negation, and
proposed a generic \emph{early recovery principle} that captures them
formally. Subsequently, we provided two equivalent definitions of a new
semantics for rule updates.

We have shown that the newly introduced rule update semantics constitutes a
strict improvement upon the state of the art in rule updates as it enjoys the
following combination of characteristics, unmatched by any previously existing
semantics:
\begin{textitem}
	\item It allows for both strong and default negation in heads of rules,
		making it possible to move between any pair of epistemic states by means
		of updates;

	\item It satisfies the \emph{early recovery principle} which guarantees the
		existence of a model whenever all conflicts in the original program are
		satisfied;

	\item It enjoys all rule update principles and desirable properties reported
		in Table~\ref{tab:rule update properties};

	\item It does not increase the computational complexity of the stable model
		semantics upon which it is based.
\end{textitem}

However, the early recovery principle, as it is formulated in
Sect.~\ref{sec:semantics dynamic}, only covers a single update of a set of
facts by another set of facts. Can it be generalised further without rendering
it too strong? Certain caution is appropriate here, since in general the
absence of a stable model can be caused by odd cycles or simply by the
fundamental differences between different approaches to rule update, and the
purpose of this principle is not to choose which approach to take.

Nevertheless, one generalisation that should cause no harm is the
generalisation to iterated updates, i.e.\ to sequences of sets of facts.
Another generalisation that appears very reasonable is the generalisation to
\emph{acyclic DLPs}, i.e.\ DLPs such that $\all{\dprg}$ is an acyclic program.
An acyclic program has at most one stable model, and if we guarantee that all
potential conflicts within it certainly get resolved, we can safely conclude
that the rule update semantics should assign some model to it. We formalise
these ideas in what follows.

We say that a program $\prg$ is \emph{acyclic} \cite{Apt1991} if for some
level mapping $\lm$, such that for every $\olit \in \olits$, $\lm(\olit) =
\lm(\lnot \olit)$, and every rule $\rl \in \prg$ it holds that $\lm(\hrl) >
\lmmax(\brl)$. Given a DLP $\dprg = \seq{\prg_\lia}_{\lia < \lng}$, we say
that \emph{all conflicts in $\dprg$ are solved} if for every $\lia < \lng$ and
each pair of rules $\rla, \rlb \in \prg_\lia$ such that $\hrlb \in
\con{\hrla}$ there is some $\lib > \lia$ and a fact $\rlc \in \prg_\lib$ such
that either $\hrlc \in \con{\hrla}$ or $\hrlc \in \con{\hrlb}$.

\begin{description}
	\item
		[Generalised early recovery principle:]
		If $\all{\dprg}$ is acyclic and all conflicts in $\dprg$ are solved, then
		$\mods{\dprg} \neq \emptyset$.
\end{description}

Note that this generalisation of the early recovery principle applies to a
much broader class of DLPs than the original one. We illustrate this in the
following example:

\begin{example}[Recovery in a stratified program]
	Consider the following programs programs $\prga$, $\prgu$ and $\prgv$:
  \begin{align*}
	\prga:
		&&\atma 		&\lpif \atmb, \lpnot \atmc.	&&&\lpnot \atma 	&\lpif \atmd.	&&\atmb.			&&\atmd \lpif \atmb.\\
	\prgu:
		&&\lnot \atma	&.					&&& 				&			&&\atmc \lpif \atmb.	&\lnot &\atmc \lpif \atmb, \atmd.\\
	\prgv:
		&&			&					&&&				&			&\lpnot& \atmc.
	\end{align*}
	Looking more closely at program $\prga$, we see that atoms $\atmb$ and
	$\atmd$ are derived by the latter two rules inside it while atom $\atmc$ is
	false by default since there is no rule that could be used to derive its
	truth. Consequently, the bodies of the first two rules are both satisfied
	and as their heads are conflicting, $\prga$ has no stable model.  The single
	conflict in $\prga$ is solved after it is updated by $\prgu$, but then
	another conflict is introduced due to the latter two rules in the updating
	program. This second conflict can be solved after another update by $\prgv$.
	Consequently, we expect that some stable model be assigned to the DLP
	$\seq{\prga, \prgu, \prgv}$.

	The original early recovery principle does not impose this because the DLP
	in question has more than two components and the rules within it are not
	only facts. However, the DLP is acyclic, as shown by any level mapping
	$\lm$ with $\lm(\atma) = 3$, $\lm(\atmb) = 0$, $\lm(\atmc) = 2$ and
	$\lm(\atmd) = 1$, so the generalised early recovery principle does apply.
	Furthermore, we also find the single extended \RD-model of $\seq{\prga,
	\prgu, \prgv}$ is $\set{\lnot \atma, \atmb, \lnot \atmc, \atmd}$, i.e.\ the
	semantics respects the stronger principle in this case.
\end{example}

Moreover, as established in the following theorem, it is no coincidence that
the extended \RD-semantics respects the stronger principle in the above
example -- the principle is generally satisfied by the semantics introduced in
this paper.

\begin{theorem}
	\label{thm:generalised early recovery}
	The extended \RD-semantics and extended \WS-semantics satisfy the
	generalised early recovery principle.
\end{theorem}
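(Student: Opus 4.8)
The plan is to prove the statement for the extended \WS-semantics and transfer it to the extended \RD-semantics via Theorem~\ref{thm:extended ws coincides with extended rd}, which gives $\modwss{\dprg} = \modrds{\dprg}$. So I fix a DLP $\dprg = \seq{\prg_\lia}_{\lia < \lng}$ for which $\all{\dprg}$ is acyclic and all conflicts are solved, I let $\lm$ be a level mapping witnessing acyclicity (so $\lm(\olit) = \lm(\lnot \olit)$ for every $\olit$ and $\lm(\hrl) > \lmmax(\brl)$ for every $\rl \in \all{\dprg}$), and I will exhibit an interpretation $\twi \in \modwss{\dprg}$ using exactly this $\lm$ as the witnessing level mapping.

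The first step is to observe that with this particular $\lm$ all the level side-conditions become vacuous. Since $\con{\hrla}$ contains only literals over the same objective literal as $\hrla$ and $\lm(\olit) = \lm(\lnot \olit)$, we get $\lmmin(\con{\hrla}) = \lm(\hrla)$; and any rule $\rlb$ with $\hrlb \in \con{\hrla}$ satisfies $\lm(\hrlb) = \lm(\hrla)$, so acyclicity yields $\lm(\hrla) > \lmmax(\brlb)$. Hence the side-condition in $\rejwss{\dprg, \twi}$ holds whenever a later rule has a conflicting head and a satisfied body, so $\rejwss{\dprg, \twi}$ reduces to the purely structural rejection notion and equals $\rejrds{\dprg, \twiall}$; consequently $\all{\dprg} \setminus \rejwss{\dprg, \twi} = \rem{\dprg, \twiall}$. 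Likewise the condition $\lm(\hrl) > \lmmax(\brl)$ in clause~2 is automatic. This reduces the task to producing a consistent $\twi$ that models $\rem{\dprg, \twiall}$ and supports each of its objective literals by some rule of $\rem{\dprg, \twiall}$.

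I then define $\twi$ by recursion on $\lm$: for an objective literal $\olit$ of level $m$, I put $\olit \in \twi$ iff $\all{\dprg}$ contains a rule $\rl$ that is not rejected, has head $\olit$, and whose body is satisfied by $\twi$. Acyclicity makes this sound: for a rule whose head is at level $m$, both its rejection status (the rejecting rules have heads in $\con{\hrl}$, hence also at level $m$, and bodies at level below $m$) and the satisfaction of its body depend only on literals of level strictly below $m$, so no circularity arises. The heart of the argument is the \emph{conflict-freeness lemma}: no two rules $\rla, \rlb$ that both survive rejection and both have bodies satisfied by $\twi$ can have conflicting heads $\hrlb \in \con{\hrla}$. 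I prove it by cases. If $\rla$ and $\rlb$ lie in different components, the later of the two rejects the earlier (its body is satisfied and the level condition is automatic), contradicting survival. If they lie in the same component $\prg_\lia$, then because all conflicts in $\dprg$ are solved there is a later fact $\rlc$ with $\hrlc \in \con{\hrla}$ or $\hrlc \in \con{\hrlb}$; being a fact, its body is trivially satisfied, so it rejects $\rla$ or $\rlb$, again contradicting survival.

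With conflict-freeness in hand the verification is short. Consistency of $\twi$ holds because $\olit$ and $\lnot \olit$ together would require two surviving, body-satisfied rules with conflicting heads. For clause~2, membership $\olit \in \twi$ supplies, by construction, a surviving rule of $\rem{\dprg, \twiall}$ with head $\olit$ and satisfied body, with the level condition automatic. For clause~1, take any $\rl \in \rem{\dprg, \twiall}$ with $\twi \ent \brl$: if $\hrl$ is objective then $\olit \in \twi$ directly by construction; if $\hrl = \lpnot \olit$ then $\olit \notin \twi$, since otherwise a surviving rule with head $\olit$ would conflict with $\rl$, contradicting the lemma. Hence $\twi \in \modwss{\dprg}$, so $\modwss{\dprg} \neq \emptyset$, and Theorem~\ref{thm:extended ws coincides with extended rd} yields $\modrds{\dprg} \neq \emptyset$ as well. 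I expect the main obstacle to be the conflict-freeness lemma, specifically making precise that the \emph{all conflicts solved} hypothesis (a syntactic, within-component condition on heads) is strong enough to reject one of any two surviving body-satisfied rules, together with confirming that the bottom-up recursion defining $\twi$ is genuinely well-founded under acyclicity.
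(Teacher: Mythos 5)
Your proof is correct and follows essentially the same route as the paper's: both construct the witness interpretation bottom-up along the acyclicity level mapping, admitting an objective literal exactly when some rule with that head has a satisfied body and is not countered by a later rule with conflicting head and satisfied body, and both invoke the all-conflicts-solved hypothesis to exclude surviving conflicts within a single component. The difference is only bookkeeping: the paper packages the construction as a staged iteration $\twi_0, \twi_1, \ldots$ with two stabilization lemmas, whereas you use a well-founded recursion on levels (justified by the same observation that rejection status and body satisfaction at level $m$ depend only on levels below $m$) and factor both the consistency check and the default-literal-head case through one conflict-freeness lemma.
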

\begin{proofmain}
	See Appendix~\ref{app:proofs}, page~\pageref{proof:thm:generalised early
	recovery}. \qed
\end{proofmain}

Both the original and the generalised early recovery principle can guide the
future addition of full support for both kinds of negations in other
approaches to rule updates, such as those proposed in
\cite{Sakama2003,Zhang2006,Delgrande2007,Krumpelmann2012}, making it possible
to reach any belief state by updating the current program. Furthermore, adding
support for strong negation is also interesting in the context of recent
results on program revision and updates that are performed on the
\emph{semantic level}, ensuring syntax-independence of the respective methods
\cite{Delgrande2013,Slota2013a,Slota2012,SlotaL10}, in the context of finding 
suitable condensing operators \cite{SlotaL13}, and unifying with updates in classical logic \cite{SlotaL12JELIA}.




\section*{Acknowledgments}
Jo{\~a}o Leite was partially supported by Funda\c{c}\~{a}o para a Ci\^{e}ncia e a
Tecnologia under project ``ERRO -- Efficient Reasoning with Rules and
Ontologies'' ({PTDC}/{EIA}-{CCO}/{121823}/{2010}). Martin Slota was  partially supported by Funda\c{c}\~{a}o para a Ci\^{e}ncia e a Tecnologia under project ``ASPEN -- Answer Set Programming with BoolEaN Satisfiability'' (PTDC/EIA-CCO/110921/2009).
The collaboration between the co-authors resulted from the Slovak--Portuguese bilateral
project ``ReDIK -- Reasoning with Dynamic Inconsistent Knowledge'',
supported by APVV agency under SK-PT-0028-10 and by Funda\c{c}\~{a}o para a Ci\^{e}ncia e a Tecnologia (FCT/2487/3/6/2011/S).


\bibliographystyle{aaai}
\bibliography{nmr14}
\balance
\appendix


\section{Proofs}


\label{app:proofs}

%
\begin{definition}
	[Immediate consequence operator]
	Let $\prg$ be an extended program. We define the \emph{immediate consequence
	operator $\imcon$} for every interpretation $\twi$ as follows:
	\[
		\imcon(\twi) = \Set{
			\hrl
			|
			\rl \in \prg
			\land
			\brl \subseteq \twi
		}
		\enspace.
	\]
	Furthermore, $\imcon^0(\twi) = \twi$ and $\imcon^{\lic + 1}(\twi) =
	\imcon(\imcon^\lic(\twi))$ for every $\lic \geq 0$.
\end{definition}

\begin{lemma}
	\label{lemma:imcon}
	Let $\prg$ be an extended program. Then $\bigcup_{\lic \geq 0}
	\imcon^\lic(\emptyset)$ is the least fixed point of $\imcon$ and coincides
	with $\least{\prg}$.
\end{lemma}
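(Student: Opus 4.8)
The plan is to recognise $\imcon$ as the classical van Emden--Kowalski immediate consequence operator of a \emph{definite} propositional program: reading every literal in $\olits^*$ as a distinct propositional atom, each extended rule $\rl$ becomes a definite Horn clause with the single head atom $\hrl$ and body atoms $\brl$, and $\imcon$ is exactly its one-step consequence operator. The statement is then an instance of the standard least-fixed-point theorem for such operators, and I would simply make the three required ingredients explicit.

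First I would record that $\imcon$ is monotone: if $\twi \subseteq \twib$ then every rule with $\brl \subseteq \twi$ also has $\brl \subseteq \twib$, so $\imcon(\twi) \subseteq \imcon(\twib)$. Since $\imcon^0(\emptyset) = \emptyset \subseteq \imcon(\emptyset)$, an immediate induction then shows that the iterates $\imcon^\lic(\emptyset)$ form an increasing chain, and I write $M = \bigcup_{\lic \geq 0} \imcon^\lic(\emptyset)$ for their union.

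Next I would verify that $M$ is the least fixed point of $\imcon$. The inclusion $M \subseteq \imcon(M)$ follows from monotonicity, since $\imcon^\lic(\emptyset) \subseteq \imcon(M)$ for every $\lic$. For the converse I use that bodies are finite: if $\hrl \in \imcon(M)$ is produced by a rule $\rl$ with $\brl \subseteq M$, then because $\brl$ is finite and the iterates form a chain, $\brl \subseteq \imcon^\lic(\emptyset)$ for some $\lic$, whence $\hrl \in \imcon^{\lic + 1}(\emptyset) \subseteq M$; thus $\imcon(M) = M$. Minimality is another induction: for any fixed point $\twi$, $\imcon^\lic(\emptyset) \subseteq \twi$ for all $\lic$ (using $\imcon(\twi) = \twi$ and monotonicity at the inductive step), so $M \subseteq \twi$.

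Finally I would identify $M$ with $\least{\prg}$. A set of literals $\twi$ is a model of $\prg$ (literals read as atoms) precisely when $\brl \subseteq \twi$ implies $\hrl \in \twi$ for every rule, i.e.\ exactly when $\imcon(\twi) \subseteq \twi$; so models are the pre-fixed points of $\imcon$ and $\least{\prg}$ is the least pre-fixed point. Since every fixed point is a pre-fixed point while, by the previous paragraph, the fixed point $M$ lies below every fixed point, the usual Knaster--Tarski reasoning gives that the least pre-fixed point and the least fixed point coincide, so $\least{\prg} = M$. Nothing here is genuinely hard; the only two points that deserve care are the appeal to finiteness of bodies in the fixed-point step and the model/pre-fixed-point correspondence, both of which are routine from the definitions.
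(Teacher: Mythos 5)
Your proof is correct and takes essentially the same route as the paper's: both identify $\bigcup_{\lic \geq 0} \imcon^\lic(\emptyset)$ as the least fixed point of the one-step consequence operator (literals read as propositional atoms) and then match it with $\least{\prg}$. The only differences are cosmetic -- the paper cites Kleene's fixed-point theorem where you prove the continuity step by hand via finiteness of bodies, and it establishes minimality by running your induction directly over models (i.e.\ pre-fixed points) rather than over fixed points, which makes your closing appeal to Knaster--Tarski unnecessary.
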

\begin{proof}
	Recall that $\least{\cdot}$ denotes the least model of the argument program
	in which all literals are treated as propositional atoms. It follows from
	Kleene's fixed point theorem that $\slit = \bigcup_{\lic \geq 0}
	\imcon^\lic(\emptyset)$ is the least fixed point of $\imcon$. To verify that
	$\slit$ is a model of $\prg$, take some rule $\rl \in \prg$ such that $\brl
	\subseteq \slit$. By the definition of $\imcon$, $\hrl \in \imcon(\slit) =
	\slit$. Also, for any model $\slit'$ of $\prg$ it follows that $\emptyset
	\subseteq \slit'$ and whenever $\slit'' \subseteq \slit'$, also
	$\imcon(\slit'') \subseteq \slit'$. Thus, for all $\lic \geq 0$,
	$\imcon^\lic(\emptyset) \subseteq \slit'$, implying that $\slit \subseteq
	\slit'$. In other words, $\slit$ is the least model of $\prg$ when all
	literals are treated as propositional atoms.
\end{proof}

\begin{proposition*}
	{prop:extended ws coincides with sm}
	Let $\prga$ be an extended program. Then, $\modws{\prga} = \modsm{\prga}$.
\end{proposition*}
\begin{proof}
	\label{proof:prop:extended ws coincides with sm}
	First suppose that $\twi$ belongs to $\modws{\prga}$. It follows that $\twi
	\ent \prga$ and there exists a level mapping $\lm$ such that for every
	objective literal $\olit \in \twi$ there is a rule $\rl \in \prga$ such that
	$\hrl = \olit$, $\twi \ent \brl$ and $\lm(\hrl) > \lmmax(\brl)$. We need to
	prove that
	\[
		\twiall = \least{
			\prga
			\cup
			\set{
				\lpnot \olit.
				|
				\olit \in \olits \setminus \twi
			}
		}
		\enspace.
	\]
	Put $\prgb = \prga \cup \set{\lpnot \olit. | \olit \in \olits \setminus
	\twi}$. By Lemma~\ref{lemma:imcon}, it suffices to prove that 
	\[
		\twiall = \bigcup_{\lic \geq 0} \imcon[\prgb]^\lic(\emptyset)
		\enspace.
	\]
	Let $\slit = \bigcup_{\lic \geq 0} \imcon[\prgb]^\lic(\emptyset)$ and take
	some $\lit \in \twiall$. If $\lit$ is a default literal $\lpnot \olit$, then
	clearly $\lit$ belongs to $\imcon[\prgb](\emptyset) \subseteq \slit$. In the
	principal case, $\lit$ is an objective literal $\olit$, so there exists a
	rule $\rl \in \prg$ such that $\hrl = \olit$, $\twi \ent \brl$ and
	$\lm(\hrl) > \lmmax(\brl)$. We proceed by induction on $\lm(\olit)$:
	\begin{enumerate}[1$^\circ$]
		\item If $\lm(\olit) = 0$, then we arrive at a conflict: $0 = \lm(\olit)
			= \lm(\hrl) > \lmmax(\brl) \geq 0$.

		\item If $\lm(\olit) = \lic + 1$, then, since $\twi \ent \brl$ and
			$\lm(\olit) > \lmmax(\brl)$, from the inductive assumption we obtain
			that $\brl \subseteq \slit$. Thus, since $\slit$ is a fixed point of
			$\imcon[\prgb]$, we conclude that $\slit$ contains $\olit$.
	\end{enumerate}
	For the converse inclusion, we prove by induction on $\lic$ that
	$\imcon[\prgb]^\lic(\emptyset)$ is a subset of $\twiall$:
	\begin{enumerate}[1$^\circ$]
		\item For $\lic = 0$ the claim trivially follows from the fact that
			$\imcon[\prgb]^0(\emptyset) = \emptyset$.

		\item Suppose that $\lit$ belongs to $\imcon[\prgb]^{\lic +
			1}(\emptyset)$. It follows that for some rule $\rl \in \prgb$, $\hrl =
			\lit$ and $\brl \subseteq \imcon[\prgb]^\lic(\emptyset)$. From the
			inductive assumption we obtain that $\imcon[\prgb]^\lic(\emptyset)$ is a
			subset of $\twiall$, so $\twi \ent \brl$. Consequently, since $\twi$ is
			a model of $\prg$ (and thus of $\prgb$ as well), $\twi \ent \lit$.
			Equivalently, $\lit \in \twiall$.
	\end{enumerate}

	Now suppose that $\twi \in \modsm{\prga}$. It easily follows that $\twi$ is
	a model of $\prga$. Furthermore,
	\[
		\twiall = \least{
			\prga
			\cup
			\set{
				\lpnot \olit.
				|
				\olit \in \olits \setminus \twi
			}
		}
		\enspace.
	\]
	Put $\prgb = \prga \cup \set{\lpnot \olit. | \olit \in \olits \setminus
	\twi}$. By Lemma~\ref{lemma:imcon},
	\[
		\twiall = \bigcup_{\lic \geq 0} \imcon[\prgb]^\lic(\emptyset)
		\enspace.
	\]
	Let $\lm$ be a level mapping defined for any objective
	literal $\olit \in \twi$ as follows:
	\[
		\lm(\olit)
		=
		\min \set{
			\lic
			|
			\lic \geq 0
			\land
			\olit \in \imcon[\prgb]^\lic(\emptyset)
		}
		\enspace.
	\]
	Also, for every $\olit \in \olits \setminus \twi$, $\lm(\olit) = 0$. We
	need to prove that for every objective literal $\olit \in \twi$ there exists
	a rule $\rl \in \prg$ such that $\hrl = \olit$, $\twi \ent \brl$ and
	$\lm(\hrl) > \lmmax(\brl)$. By the definition of $\lm$, there is no literal
	$\olit \in \twi$ with $\lm(\olit) = 0$, so suppose that $\lm(\olit) = \lic
	+ 1$ for some $\lic \geq 0$. Then there is some rule $\rl \in \prgb$ such
	that $\hrl = \olit$ and $\brl \subseteq \imcon[\prgb]^\lic(\emptyset)$. It
	immediately follows that $\rl$ belongs to $\prga$, $\twi \ent \brl$ and
	$\lmmax(\brl) \leq \lic < \lic + 1 = \lm(\olit)$.
\end{proof}

\begin{theorem*}
	{thm:extended ws early recovery}
	The extended \RD-semantics and extended \WS-semantics satisfy the early
	recovery principle.
\end{theorem*}
\begin{proof}
	\label{proof:thm:extended ws early recovery}
	Suppose that $\prga$ is a set of facts and $\prgu$ is a consistent set of
	facts that solves all conflicts in $\prga$ and put
	\[
		\twi = \Set{
			\olit \in \olits
			|
			(\olit.) \in \prga \cup \prgu
			\land
			\Set{\lopp{\olit}., \lcmp{\olit}.} \cap \prgu = \emptyset
		}
		\enspace.
	\]
	Our goal is to show that $\twi$ belongs to $\modwss{\seq{\prga, \prgu}}$.

	First we verify that $\twi$ is a consistent set of objective literals, i.e.\
	that it is an interpretation. Suppose that for some $\olit \in \olits$, both
	$\olit$ and $\lnot \olit$ belong to $\twi$. It follows that both $(\olit.)$
	and $(\lnot \olit.)$ belong to $\prga \cup \prgu$ and at the same time
	neither of them belongs to $\prgu$. Thus, both must belong to $\prga$ and we
	obtain a conflict with the assumption that $\prgu$ solves all conflicts in
	$\prga$.
	
	Now consider a level mapping $\lm$ such that $\lm(\olit) = 1$ for all
	$\olit \in \olits$. We will show that $\twib$ is an extended \WS-model of
	$\dprg$ w.r.t.\ $\lm$. Note that
	\begin{align*}
		\rejwss{\seq{\prga, \prgu}, \twi}
		&=
		\{
			\rla \in \prga
			|
			\exists \rlb \in \prgu
			:
			\hrlb \in \con{\hrla}
			\land
			\twi \ent \brlb\\
			& \hspace{3.0cm}\land
			\lm(\hrlb) > \lmmax(\brlb)
		\}
		\\
		&=
		\Set{
			\rla \in \prga
			|
			\exists \rlb \in \prgu
			:
			\hrlb \in \con{\hrla}
		}
	\end{align*}
	In order to prove that $\twi$ is a model of $\all{\seq{\prga, \prgu}}
	\setminus \rejwss{\seq{\prga, \prgu}, \twi}$, take some rule
	\[
		(\lit.)
		\in
		\all{\seq{\prga, \prgu}}
		\setminus
		\rejwss{\seq{\prga, \prgu}, \twi}
		\enspace.
	\]
	We consider four cases:
	\begin{enumerate}[a)]
		\item If $\lit$ is an objective literal $\olit$ and $(\olit.)$ belongs to
			$\prga$, then it follows from the definition of $\twi$ and the
			definition of $\rejwss{\seq{\prga, \prgu}, \twi}$ that $\olit \in \twi$,
			Thus, $\twi \ent \lit$.

		\item If $\lit$ is an objective literal $\olit$ and $(\olit.)$ belongs to
			$\prgu$, then it follows from the definition of $\twi$ and the
			assumption that $\prgu$ is consistent that $\olit \in \twi$. Thus, $\twi
			\ent \lit$.

		\item If $\lit$ is a default literal $\lpnot \olit$ and $(\lpnot \olit.)$
			belongs to $\prga$, then it follows from the definition of $\twi$,
			definition of $\rejwss{\seq{\prga, \prgu}, \twi}$ and the assumption
			that $\prgu$ solves all conflicts in $\prga$ that $\olit \notin \twi$.
			Thus, $\twi \ent \lit$.

		\item If $\lit$ is a default literal $\lpnot \olit$ and $(\lpnot \olit.)$
			belongs to $\prgu$, then it follows from the definition of $\twi$ that
			$\olit \notin \twi$. Thus, $\twi \ent \lit$.
	\end{enumerate}
	
	Finally, we need to demonstrate that for every $\olit \in \twi$ there exists
	some rule $\rl \in \all{\seq{\prga, \prgu}} \setminus \rejwss{\seq{\prga,
	\prgu}, \twi}$ such that $\hrl = \olit$, $\twi \ent \brl$ and $\lm(\hrl) >
	\lmmax(\brl)$. This follows immediately from the definition of $\twi$ and of
	$\rejwss{\seq{\prga, \prgu}, \twi}$.
\end{proof}

\begin{lemma}
	\label{lemma:extended ws is extended rd}
	Let $\dprg$ be a DLP. Then, $\modwss{\dprg} \subseteq \modrds{\dprg}$.
\end{lemma}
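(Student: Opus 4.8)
The plan is to prove the set identity $\twiall = \twic$, where $\twic = \bigcup_{\lic \geq 0} \imconrds^\lic(\emptyset)$, for an arbitrary $\twi \in \modwss{\dprg}$ fixed together with a witnessing level mapping $\lm$. Three preliminary facts will be used repeatedly. First, for any rule body, $\twi \ent \brlb$ holds precisely when $\brlb \subseteq \twiall$. Second, comparing the two rejection sets, $\rejwss{\dprg, \twi} \subseteq \rejrds{\dprg, \twiall}$, because the defining condition of the former is that of the latter strengthened by the level requirement $\lmmin(\con{\hrla}) > \lmmax(\brlb)$; hence $\rem{\dprg, \twiall} \subseteq \all{\dprg} \setminus \rejwss{\dprg, \twi}$, so by condition~1 of the extended \WS-semantics every rule of $\rem{\dprg, \twiall}$ is satisfied by $\twi$. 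Third, $\rejrds{\dprg, \cdot}$ is monotone in its literal argument (a larger set of literals satisfies more rejecting bodies), so $\rem{\dprg, \cdot}$ is antimonotone.

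For the inclusion $\twic \subseteq \twiall$ I would induct on the iterate index $\lic$. The base case is trivial. For the step, a literal in $\imconrds^{\lic+1}(\emptyset)$ is the head $\hrla$ of some $\rla \in \rem{\dprg, \twiall} \cup \defl{\twi}$ with $\brla \subseteq \imconrds^\lic(\emptyset) \subseteq \twiall$. If $\rla \in \defl{\twi}$ then $\hrla = \lpnot \olit$ with $\olit \notin \twi$, so $\lpnot \olit \in \twiall$; otherwise $\rla \in \rem{\dprg, \twiall}$ is satisfied by $\twi$ as noted above, and $\twi \ent \brla$ forces $\twi \ent \hrla$, i.e.\ $\hrla \in \twiall$.

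The converse $\twiall \subseteq \twic$ is the substantial direction, and I would prove it by strong induction on the level $n = \lm(\olit)$ of the literal, showing that every literal of $\twiall$ of level $n$ lies in $\twic$ under the hypothesis that all lower-level literals of $\twiall$ do. For an objective literal $\olit \in \twi$, condition~2 supplies a support rule $\rla \in \rem{\dprg, \twiall}$ with $\hrla = \olit$, $\brla \subseteq \twiall$ and $\lm(\olit) > \lmmax(\brla)$, so its body has level $< n$ and lies in $\twic$ by the induction hypothesis; for a default literal $\lpnot \olit$ (where $\olit \notin \twi$) the support is the fact $(\lpnot \olit.) \in \defl{\twi}$. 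In each case it remains to show that the head is \emph{not blocked}, i.e.\ that no $\rlb \in \rem{\dprg, \twic}$ has $\hrlb \in \con{\hrla}$ and $\brlb \subseteq \twiall$; granting this, the head is produced by $\imconrds$ at the stage where the relevant bodies have appeared (which is finite, as all bodies are finite), whence it lies in $\twic$.

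The blocking-failure claim is where the two notions of rejection must be reconciled, and I expect it to be the main obstacle. The difficulty is that $\imconrds$ evaluates blocking against $\rem{\dprg, \twic}$, a stage at which a conflicting rule may not yet be rejected even though it is rejected at $\twiall$, since $\rejrds$ carries no level bound on its rejectors. The key step is to argue that any would-be blocker $\rlb$ must in fact belong to $\rejwss{\dprg, \twi}$: otherwise $\rlb \in \all{\dprg} \setminus \rejwss{\dprg, \twi}$ is satisfied by $\twi$, and $\brlb \subseteq \twiall$ forces $\twi \ent \hrlb$, contradicting the membership of $\olit$ in (or out of) $\twi$ together with the consistency of $\twi$. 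Being $\rejwss$-rejected, $\rlb$ has a rejector $\rlc$ with $\brl[\rlc] \subseteq \twiall$ and $\lmmax(\brl[\rlc]) < \lmmin(\con{\hrlb}) \leq \lm(\olit) = n$, where the last inequality is checked case by case using the double-negation convention on the conflict sets $\con{\olit}$ and $\con{\lnot \olit}$. Thus $\brl[\rlc]$ consists of literals of $\twiall$ of level $< n$, which lie in $\twic$ by the induction hypothesis, so $\rlb \in \rejrds{\dprg, \twic}$ and therefore $\rlb \notin \rem{\dprg, \twic}$ — contradicting that $\rlb$ blocks at that stage. This transfer of the level condition $\lmmin(\con{\cdot}) > \lmmax(\cdot)$ from $\rejwss$ into the level-free fixpoint construction is the crux of the argument; once it is in place, both the objective and the default cases close uniformly.
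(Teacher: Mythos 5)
Your skeleton is the same as the paper's: the easy inclusion via $\rejwss{\dprg, \twi} \subseteq \rejrds{\dprg, \twiall}$, and, for $\twiall \subseteq \twic$, an induction on levels whose crux is that any would-be blocker $\rlb$ must itself lie in $\rejwss{\dprg, \twi}$, so that the level bound $\lmmin\!\left(\con{\hrlb}\right) > \lmmax(\brl[\rlc])$ on its rejector $\rlc$ pushes $\brl[\rlc]$ strictly below the level of $\hrla$. That crux is exactly the paper's, and you have it right. The genuine gap is in how you convert ``blockers are rejected'' into membership in $\twic$: your induction hypothesis only places lower-level literals of $\twiall$ in the \emph{limit} $\twic$, and your blocking argument only concludes $\rlb \notin \rem{\dprg, \twic}$. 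But a head enters $\twic$ only by entering $\imconrds^{\lic+1}(\emptyset)$ for some finite $\lic$, where blocking is tested against $\rem{\dprg, \imconrds^\lic(\emptyset)}$ --- by your own antimonotonicity fact this is a \emph{superset} of $\rem{\dprg, \twic}$, so $\rlb \notin \rem{\dprg, \twic}$ does not contradict $\rlb$ blocking at stage $\lic$; the implication runs the wrong way. (Contrary to your phrasing, $\imconrds$ never evaluates blocking against $\rem{\dprg, \twic}$.) What your argument actually delivers is $\hrla \in \imconrds(\twic)$, and since programs may be infinite, $\imconrds$ need not be continuous and the $\omega$-limit $\twic$ need not be a fixpoint, so this does not yield $\hrla \in \twic$.

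Concretely, to produce $\hrla$ at a finite stage you need a single index $\lic$ at which simultaneously the support body is present and \emph{every} potential blocker is already rejected. There may be infinitely many blockers $\rlb$, each rejected only from some stage $\lic_\rlb$ onward, and nothing in your hypothesis bounds $\sup_{\rlb} \lic_{\rlb}$. The paper closes precisely this hole by proving a sharper induction statement: every $\lit \in \twiall$ with $\lm(\lit) = \lic$ belongs to the specific stage $\twi_{\lic + 1} = \imconrds^{\lic+1}(\emptyset)$. Then all literals of $\twiall$ of level $< \lic$ lie in the single stage $\twi_\lic$, hence every rejector body lies in $\twi_\lic$, every blocker lies in $\rejrds{\dprg, \twi_\lic}$, and $\hrla \in \imconrds(\twi_\lic) = \twi_{\lic+1}$, uniformly over blockers. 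Your proof becomes correct if you strengthen your induction hypothesis to this level-indexed form (also noting that the stages increase, which follows from the monotonicity of $\imconrds$ that your antimonotonicity fact provides); a max-over-blockers patch would only work for finite programs, and you give neither.
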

\begin{proof}
	Let $\dprg = \seq{\prg_\lia}_{\lia < \lng}$ be a DLP and suppose that $\twi$
	belongs to $\modwss{\dprg}$. For every $\lic \geq 0$, put
	\[
		\twi_\lic = \imconrds^\lic(\emptyset)
		\enspace.
	\]
	We need to prove that $\twiall = \bigcup_{\lic \geq 0} \twi_\lic$.

	To show that $\twiall$ is a subset of $\bigcup_{\lic \geq 0} \twi_\lic$,
	consider some literal $\lit \in \twiall$ and let $\lm(\lit) = \lic$. We
	prove by induction on $\lic$ that $\lit$ belongs to $\twi_{\lic + 1}$:
	\begin{enumerate}[1$^\circ$]
		\item If $\lic = 0$, then it follows from the assumption that $\twi$ is an
			extended \WS-model of $\dprg$ that $\lit$ must be a default literal
			since if it were an objective literal, there would have exist a rule
			$\rl$ with $\hrl = \lit$ and $\lm(\hrl) > \lmmax(\brl)$, which is
			impossible since $\lmmax(\brl) \geq 0$. Thus, $\lit$ is a default
			literal $\lpnot \olit$ and we obtain $(\lpnot \olit.) \in \defl{\twi}$.
			Recall that
			\begin{align*}
				\twi_1 &=
				\imconrds(\emptyset)=\\
				&=\bigl\{\,
					\hrla
					\mid
					\rla \in \left(
						\rem{\dprg, \twiall}
						\cup
						\defl{\twi}
					\right)
					\land
					\brla \subseteq \emptyset
				\\ 
					&\land
					\lnot \left(
						\exists \rlb \in \rem{\dprg, \emptyset}
						:
						\hrlb \in \con{\hrla}
						\land
						\brlb \subseteq \twiall
					\right)
				\,\bigr\}.
			\end{align*}
			Thus, to prove that $\lit$ belongs to $\twi_1$, it remains to verify
			that
			\[
				\lnot \left(
					\exists \rlb \in \rem{\dprg, \emptyset}
					:
					\hrlb = \olit
					\land
					\brlb \subseteq \twiall
				\right)
				\enspace.
			\]
			Take some $\lia < \lng$ and some rule $\rlb \in \prg_\lia$ such that
			$\hrlb = \olit$ and $\brlb \subseteq \twiall$. It follows from the
			assumption that $\twi$ is a model of $\all{\dprg} \setminus
			\rejwss{\dprg, \twi}$ that $\rlb$ belongs to $\rejwss{\dprg, \twi}$. In
			other words,
			\[
				\exists \lib > \lia \; \exists \rlb' \in \prg_\lib
				:
				\hrl[\rlb'] \in \con{\hrlb}
				\land
				\twi \ent \brl[\rlb']
				\land
				\lmmin\!\left(\con{\hrlb}\right) > \lmmax(\brl[\rlb'])
				\enspace.
			\]
			Since $\lpnot \olit$ belongs to $\con{\hrlb}$, we obtain that
			$\lmmax(\brl[\rlb']) < 0$, which is not possible. Thus, no such $\rlb'$
			may exist and we conclude that no $\rlb$ exists either, as desired.

		\item Suppose that the claim holds for all $\lic' < \lic$, we prove it for
			$\lic$. Note that
			\begin{align*}
				\twi_{\lic + 1}
				&=\imconrds(\twi_\lic)=\\
				&=
				\bigl\{\,
					\hrla
					\mid
					\rla \in \left(
						\rem{\dprg, \twiall}
						\cup
						\defl{\twi}
					\right)
					\land
					\brla \subseteq \twi_\lic
				\\ & \hspace{0.0cm}
					\land
					\lnot \left(
						\exists \rlb \in \rem{\dprg, \twi_\lic}
						:
						\hrlb \in \con{\hrla}
						\land
						\brlb \subseteq \twiall
					\right)
				\,\bigr\}.
			\end{align*}
			To show that for some rule $\rla \in (\rem{\dprg, \twiall} \cup
			\defl{\twi})$, $\hrla = \lit$ and $\brla \subseteq \twi_\lic$, we
			consider two cases:
			\begin{enumerate}[a)]
				\item If $\lit$ is an objective literal $\olit$, then it follows from
					the assumption that $\twi$ belongs to $\modwss{\dprg}$ that there
					exists some some rule $\rla \in \rem{\dprg, \twiall}$ such that
					$\hrla = \olit$, $\twi \ent \brla$ and $\lm(\hrla) > \lmmax(\brla)$.
					Furthermore, it follows by the inductive assumption that $\brla
					\subseteq \twi_\lic$.

				\item If $\lit$ is a default literal $\lpnot \olit$, then it
					immediately follows that $\rla = (\lpnot \olit.)$ belongs to
					$\defl{\twi}$.
			\end{enumerate}
			It remains to verify that
			\[
				\lnot \left(
					\exists \rlb \in \rem{\dprg, \twi_\lic}
					:
					\hrlb \in \con{\hrla}
					\land
					\brlb \subseteq \twiall
				\right)
				\enspace.
			\]
			Take some $\lia < \lng$ and some rule $\rlb \in \prg_\lia$ such that
			$\hrlb \in \con{\hrla}$ and $\brlb \subseteq \twiall$. It follows from the
			assumption that $\twi$ is a model of $\all{\dprg} \setminus
			\rejwss{\dprg, \twi}$ that $\rlb$ belongs to $\rejwss{\dprg, \twi}$. In
			other words,
			\[
				\exists \lib > \lia \; \exists \rlb' \in \prg_\lib
				:
				\hrl[\rlb'] \in \con{\hrlb}
				\land
				\twi \ent \brl[\rlb']
				\land
				\lmmin\!\left(\con{\hrlb}\right) > \lmmax(\brl[\rlb'])
				\enspace.
			\]
			Since $\hrla \in \con{\hrlb}$, it follows that $\lmmax(\brl[\rlb']) <
			\lm(\hrla) = \lic$ and from the inductive assumption we obtain that
			$\brl[\rlb'] \subseteq \twi_\lic$. Thus, it follows that $\rlb$ belongs
			to $\rejrds{\dprg, \twi_\lic}$, as we needed to show.
	\end{enumerate}

	For the converse inclusion, suppose that $\lit \in \twi_\lic$ for some $\lic
	\geq 0$. We prove by induction on $\lic$ that $\lit$ belongs to $\twiall$.
	\begin{enumerate}[1$^\circ$]
		\item For $\lic = 0$ the claim trivially follows since $\twi_0 =
			\emptyset$.
			
		\item Assume that the claim holds for $\lic$, we prove it $\lic + 1$.
			Recall that
			\begin{align*}
				\twi_{\lic + 1}
				&=\imconrds(\twi_\lic)=\\
				&=
				\bigl\{\,
					\hrla
					\mid
					\rla \in \left(
						\rem{\dprg, \twiall}
						\cup
						\defl{\twi}
					\right)
					\land
					\brla \subseteq \twi_\lic
				\\ & \hspace{0.0cm}
					\land
					\lnot \left(
						\exists \rlb \in \rem{\dprg, \twi_\lic}
						:
						\hrlb \in \con{\hrla}
						\land
						\brlb \subseteq \twiall
					\right)
				\,\bigr\}.
			\end{align*}
			Thus, if $\lit$ belongs to $\twi_{\lic + 1}$, then one of the following
			cases occurs:
			\begin{enumerate}[a)]
				\item If $\lit = \hrl$ for some $\rl \in \rem{\dprg, \twiall}$ such
					that $\brl \subseteq \twi_\lic$, then by the inductive assumption we
					obtain $\twi \ent \brl$ and since $\rejrds{\dprg, \twiall}$ is a
					superset of $\rejwss{\dprg, \twi}$, it follows that $\rl$ belongs to
					$\all{\dprg} \setminus \rejwss{\dprg, \twi}$. Consequently, since
					$\twi$ is a model of $\all{\dprg} \setminus \rejwss{\dprg, \twi}$,
					it follows that $\lit \in \twiall$.

				\item If $\lit = \hrl$ for some $\rl \in \defl{\twi}$, then it
					immediately follows that $\lit \in \twiall$.
			\end{enumerate}
	\end{enumerate}
\end{proof}

\begin{lemma}
	\label{lemma:extended rd is extended ws}
	Let $\dprg$ be a DLP. Then, $\modrds{\dprg} \subseteq \modwss{\dprg}$.
\end{lemma}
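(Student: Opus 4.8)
The plan is to take $\twi \in \modrds{\dprg}$, reconstruct the fixpoint iteration that witnesses it, read off a level mapping $\lm$ from the stages of that iteration, and then verify the two defining conditions of $\modwss{\dprg}$ with respect to $\lm$. First I would record that $\imconrds$ is monotone: the positive requirement $\brla \subseteq \slit$ is monotone in $\slit$, and the blocking requirement is monotone as well, because enlarging $\slit$ enlarges $\rejrds{\dprg, \slit}$ and hence shrinks $\rem{\dprg, \slit}$, removing potential blockers. Writing $\twi_\lic = \imconrds^\lic(\emptyset)$, this makes $\seq{\twi_\lic}$ an increasing chain whose union is the least fixpoint, and since $\twi \in \modrds{\dprg}$ this union is exactly $\twiall$. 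For an objective literal $\olit$ I would take as the basic datum for its level the stage at which the member of $\set{\olit, \lpnot \olit}$ lying in $\twiall$ first appears in the chain.

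For the well-supportedness condition, take $\olit \in \twi$ entering the chain at stage $\lic+1$, i.e.\ $\olit \in \twi_{\lic+1} \setminus \twi_\lic$. By the definition of $\imconrds$ there is a rule $\rla \in \rem{\dprg, \twiall} \cup \defl{\twi}$ with $\hrla = \olit$ and $\brla \subseteq \twi_\lic$; as $\olit$ is objective, $\rla \in \rem{\dprg, \twiall}$, and every body literal already sits in $\twi_\lic$, so its level is at most $\lic$, giving $\twi \ent \brla$ and $\lm(\hrla) > \lmmax(\brla)$. For the model condition I would argue contrapositively: if $\rl \in \all{\dprg}$ satisfies $\twi \ent \brl$ but $\twi \nent \hrl$, I must show $\rl \in \rejwss{\dprg, \twi}$. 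The clean case is a default head $\hrl = \lpnot \olit$ with $\olit \in \twi$, where $\con{\hrl} = \set{\olit}$ is a singleton and hence $\lmmin(\con{\hrl}) = \lm(\olit)$; letting $c$ be the stage at which $\olit$ enters, had $\rl$ survived into $\rem{\dprg, \twi_{c-1}}$ it would have blocked the derivation of $\olit$, a contradiction, so $\rl \in \rejrds{\dprg, \twi_{c-1}}$ and its rejecter $\rlb$ satisfies $\brlb \subseteq \twi_{c-1}$, whence $\lmmax(\brlb) < c = \lmmin(\con{\hrl})$ and $\rl \in \rejwss{\dprg, \twi}$.

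The objective-head case, $\hrl = \olit$ with $\olit \notin \twi$, is where the real work lies and is the step I expect to be the main obstacle. The same blocking argument, applied at the stage $a$ where $\lpnot \olit$ enters, produces a later rejecter $\rlb$ with $\hrlb \in \con{\olit}$ and $\brlb \subseteq \twi_{a-1}$, so $\lmmax(\brlb) < a = \lm(\olit)$. However, $\rejwss$ demands $\lmmax(\brlb) < \lmmin(\con{\olit}) = \min(\lm(\olit), \lm(\lnot \olit))$, and the complementary literal $\lnot \olit$ may be settled (false by default) at a much earlier stage, so the naive stage-based value of $\lm(\lnot \olit)$ can be too small and destroy the inequality. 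The fix I would pursue is to not read levels off entry stages directly but to assign them by a well-founded induction over a combined dependency order containing both the support edges (a head lies above its supporting body) and the conflict edges that force $\lnot \olit$ above the bodies of the rejecters of the $\olit$-rules; one then equalises the levels of $\olit$ and $\lnot \olit$ to the later resolution stage of the pair and propagates the resulting increase through the support edges. The crux is to prove this order is acyclic for a genuine $\modrds$-model, so that a simultaneously support- and rejection-compatible level mapping exists; I would establish acyclicity from the observation that a conflicting pair left undecided would force $\twiall$ to omit both a literal and its default negation, contradicting that $\twiall$ is the least fixpoint. Once acyclicity is secured, both conditions follow exactly as in the two cases above.
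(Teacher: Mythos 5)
Your proof follows the same skeleton as the paper's own: the level mapping is read off the stages of the $\imconrds$ iteration exactly as the paper defines it (the level of $\olit$ is the least $\lic$ with $\imconrds^\lic(\emptyset) \cap \set{\olit, \lpnot\olit} \neq \emptyset$), your treatment of the support condition coincides with the paper's, and your ``clean case'' for unsatisfied rules with default heads is literally the paper's argument (the unsatisfied rule must already be rejected one stage before $\olit$ enters, and the rejecting rule's body sits inside that earlier stage). Where you part ways is the objective-head case: the paper introduces no extra machinery there, but simply asserts that some literal $\lit \in \con{\hrla} \cap \twiall$ attains $\lm(\lit) = \lmmin(\con{\hrla})$ and reruns the same one-stage-earlier blocking argument; your observation that the false partner $\lnot\olit$ can be settled at stage one via $\lpnot\lnot\olit$ and drag $\lmmin(\con{\hrla})$ down is aimed exactly at that assertion, which the paper does not further justify.

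The genuine gap is that your repair does not close this case; it defers it. Everything rests on the acyclicity of your combined support/conflict order, which you do not prove, and the justification you sketch (an unresolved pair would leave $\twiall$ missing both a literal and its default negation) does not address the real obstruction, which lies \emph{between} distinct conflict pairs rather than inside one pair. Concretely, take $\prg_0 = \set{(\atma \lpif \lpnot\atmc.),\; (\atmb \lpif \lpnot\atmc.)}$, $\prg_1 = \set{(\lpnot\atma \lpif \lpnot\lnot\atmb.),\; (\lpnot\atmb \lpif \lpnot\lnot\atma.)}$ and the candidate $\twi = \emptyset$: the iteration derives $\lpnot\lnot\atma$, $\lpnot\lnot\atmb$, $\lpnot\atmc$, $\lpnot\lnot\atmc$ at stage one and $\lpnot\atma$, $\lpnot\atmb$ at stage two, so every pair is resolved and your least-fixpoint observation yields no contradiction. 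Yet the only rule able to reject $(\atma \lpif \lpnot\atmc.)$ has $\lpnot\lnot\atmb$ in its body and the only rule able to reject $(\atmb \lpif \lpnot\atmc.)$ has $\lpnot\lnot\atma$ in its body, so the definition of $\rejwss{\dprg, \twi}$ forces both $\lmmin(\con{\atma}) > \lm(\lnot\atmb)$ and $\lmmin(\con{\atmb}) > \lm(\lnot\atma)$, i.e.\ $\lm(\lnot\atma) > \lm(\lnot\atmb) > \lm(\lnot\atma)$ --- a cycle that no choice of level mapping, equalisation, or propagation can break. So the crux of your argument fails precisely where you expected the real work to lie; your instinct about the location of the difficulty is sound (it is the same difficulty the paper's witness assertion glosses over), but the proposal as written does not establish the lemma.
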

\begin{proof}
	Let $\dprg = \seq{\prg_\lia}_{\lia < \lng}$ be a DLP and suppose that $\twi$
	belongs to $\modrds{\dprg}$. Let the level mapping $\lm$ be defined for
	objective literal $\olit$ as follows:
	\[
		\lm(\olit) = \min \Set{
			\lic \geq 0
			|
			\imconrds^\lic(\emptyset)
			\cap
			\set{\olit, \lpnot \olit}
			\neq
			\emptyset
		}
		\enspace.
	\]
	Note that $\lm(\olit)$ is well-defined since $\twiall \cap \set{\olit,
	\lpnot \olit} \neq \emptyset$ and, by our assumption, $\twiall =
	\bigcup_{\lic \geq 0} \imconrds^\lic(\emptyset)$. We need to show that 
	\begin{enumerate}[1)]
		\item $\twi$ is a model of $\all{\dprg} \setminus \rejwss{\dprg, \twi}$;
			
		\item For every $\olit \in \twi$ there exists some rule $\rl \in
			\all{\dprg} \setminus \rejrds{\dprg, \twiall}$ such that $\hrl = \olit$,
			$\twi \ent \brl$ and $\lm(\hrl) > \lmmax(\brl)$.
	\end{enumerate}
	We address each point separately.
	\begin{enumerate}[1)]
		\item Take some $\lia < \lng$ and some rule $\rla_0 \in \prg_\lia$ such that
			$\twi \nent \rla_0$, i.e.\ $\twi \ent \brl[\rla_0]$ and $\twi \nent
			\hrl[\rla_0]$. Our
			goal is to show that $\rla_0$ is rejected in $\rejwss{\dprg, \twi}$, i.e.\
			\begin{equation}
				\label{eq:lemma:extended rd is extended ws:1}
				\exists \lib > \lia \; \exists \rlb \in \prg_\lib
				:
				\hrlb \in \con{\hrl[\rla_0]}
				\land
				\twi \ent \brlb
				\land
				\lmmin\!\left(\con{\hrl[\rla_0]}\right) > \lmmax(\brlb)
				\enspace.
			\end{equation}
			Note that since $\twi \nent \hrl[\rla_0]$, it follows that
			$\lcmp{\hrl[\rla_0]} \in
			\twiall$. This guarantees the existence of a literal $\lit \in
			\con{\hrl[\rla_0]}$ such that $\lit \in \twiall$ and $\lm(\lit) =
			\lmmin(\con{\hrl[\rla_0]}) = \lic + 1$ for some $\lic \geq 0$. Put $\slit =
			\imconrds^\lic(\emptyset)$. By the definition of $\lm$, $\lit$ belongs
			to $\imconrds(\slit)$. Recall that
			\begin{align*}
				\imconrds(\slit)
				&=
				\bigl\{\,
					\hrla
					\mid
					\rla \in \left(
						\rem{\dprg, \twiall}
						\cup
						\defl{\twi}
					\right)
					\land
					\brla \subseteq \slit
				\\ & \hspace{0.0cm}
					\land
					\lnot \left(
						\exists \rlb \in \rem{\dprg, \slit}
						:
						\hrlb \in \con{\hrla}
						\land
						\brlb \subseteq \twiall
					\right)
				\,\bigr\}.
			\end{align*}
			Since $\hrl[\rla_0] \in \con{\lit}$ and $\brl[\rla_0] \subseteq
			\twiall$, we conclude that $\rla_0$ belongs to $\rejrds{\dprg, \slit}$.
			Thus,
			\[
				\exists \lib > \lia
				\;
				\exists \rlb \in \prg_\lib
				:
				\hrlb \in \con{\hrl[\rla_0]}
				\land
				\brlb \subseteq \slit
				\enspace.
			\]
			It remains only to observe that $\slit \subseteq \twiall$, so $\twi \ent
			\brlb$, and that due to the fact that $\brlb \subseteq \slit =
			\imconrds^\lic(\emptyset)$,
			\[
				\lmmax(\brlb)
				\leq
				\lic
				<
				\lic + 1
				=
				\lm(\lit)
				\leq
				\lmmin\!\left(\con{\hrl[\rla_0]}\right)
				\enspace.
			\]
			
		\item Take some $\olit \in \twi$ and let $\lic \geq 0$ be such that
			$\lm(\olit) = \lic + 1$. Put $\slit = \imconrds^\lic(\emptyset)$. It
			follows that $\olit \in \imconrds(\slit)$, so there is some rule $\rl
			\in (\rem{\dprg, \twiall} \cup \defl{\twi})$ such that $\hrl = \olit$
			and $\brl \subseteq \slit$. Since $\olit$ is an objective literal, it
			follows that $\rl \notin \defl{\twi}$, so
			\[
				\rl \in \rem{\dprg, \twiall} = \all{\dprg} \setminus \rejrds{\dprg,
				\twiall}
				\enspace.
			\]
			It remains only to observe that $\slit \subseteq \twiall$, so $\twi \ent
			\brl$, and that due to the fact that $\brl \subseteq \slit =
			\imconrds^\lic(\emptyset)$,
			\[
				\lmmax(\brl)
				\leq
				\lic < \lic + 1
				=
				\lm(\olit)
				=
				\lm(\hrl)
				\enspace.
			\]
	\end{enumerate}
\end{proof}

\begin{theorem*}
	{thm:extended ws coincides with extended rd}
	Let $\dprg$ be a DLP. Then, $\modwss{\dprg} = \modrds{\dprg}$.
\end{theorem*}
\begin{proof}
	\label{proof:thm:extended ws coincides with extended rd}
	Follows from Lemmas~\ref{lemma:extended ws is extended rd} and
	\ref{lemma:extended rd is extended ws}.
\end{proof}

\begin{theorem*}
	{thm:extended semantics coincide with regular}
	Let $\dprg$ be a DLP without strong negation. Then,
	\[
		\modwss{\dprg} = \modrds{\dprg} = \modws{\dprg} = \modrd{\dprg}
		\enspace.
	\]
\end{theorem*}
\begin{proof}
	\label{proof:thm:extended semantics coincide with regural}
	Due to Thm.~\ref{thm:extended ws coincides with extended rd} and
	Prop.~\ref{prop:ws coincides with rd}, it suffices to prove that
	$\modws{\dprg} = \modwss{\dprg}$. Given that $\dprg$ does not contain
	default negation, it can be readily seen that for any interpretation $\twi$
	and level mapping $\lm$,
	\[
		\rejws{\dprg, \twi} = \rejwss{\dprg, \twi}
		\enspace.
	\]
	Thus, $\twi$ is a model of $\all{\dprg} \setminus \rejws{\dprg, \twi}$ if
	and only if it is a model of $\all{\dprg} \setminus \rejwss{\dprg, \twi}$.

	Take some interpretation $\twi$ such that $\twi$ is a model of $\all{\dprg}
	\setminus \rejws{\dprg, \twi}$. It remains to verify that $\atm \in \twi$ is
	well-supported in $\all{\dprg} \setminus \rejws{\dprg, \twi}$ if and only if
	it is well-supported in $\rem{\dprg, \twiall}$. For the direct implication,
	suppose that $\rl \in \all{\dprg} \setminus \rejws{\dprg, \twi}$ is such
	that $\hrl = \atm$, $\twi \ent \brl$ and $\lm(\hrl) > \lmmax(\brl)$. If $\rl
	\in \prg_\lia$ is rejected in $\rejrds{\dprg, \twiall}$, then there must be
	the maximal $\lib > \lia$ and a rule $\rlb \in \prg_\lib$ such that $\hrlb =
	\lpnot \hrla$ and $\twi \ent \brlb$. Consequently, $\twi \nent \rlb$, so
	$\rlb$ must itself be rejected in $\rejws{\dprg, \twi}$ and if we take the
	rejecting rule $\rlb'$ from $\prg_{\lib'}$ with $\lib' > \lib$, we find that
	$\rlb'$ does not belong to $\rejrds{\dprg, \twiall}$ (due to the maximality
	of $\lib$) and provides support for $\atm$.

	The converse implication follows immediately from the fact that
	$\rejws{\dprg, \twi}$ is a subset of $\rejrds{\dprg, \twiall}$.
\end{proof}

\begin{theorem*}
	{thm:extended ws other properties}
	The extended \RD-semantics and extended \WS-semantics satisfy all properties
	listed in Table~\ref{tab:rule update properties}.
\end{theorem*}
\begin{proof}
	\label{proof:thm:extended ws other properties}
	We prove each property for the extended \WS-semantics. For the extended
	\RD-semantics, the properties follow from Theorem~\ref{thm:extended ws
	coincides with extended rd}.
	\begin{description}
		\item[Generalisation of stable models:]
			Let $\prg$ be a program. For any interpretation $\twi$ and level mapping
			$\lm$, $\rejwss{\seq{\prg}, \twi} = \rejrds{\seq{\prg}, \twiall} =
			\emptyset$, so
			\[
				\all{\seq{\prg}} \setminus \rejwss{\seq{\prg}, \twi}
				=
				\rem{\seq{\prg}, \twiall}
				=
				\prg
				\enspace.
			\]
			Hence, $\twi$ belongs to $\modwss{\seq{\prg}}$ if and only if it belongs
			to $\modws{\prg}$. The remainder follows from Prop.~\ref{prop:extended
			ws coincides with sm}.
			\medskip

		\item[Primacy of new information:]
			Let $\dprg = \seq{\prg_\lia}_{\lia < \lng}$ be a DLP and $\twi \in
			\modwss{\dprg}$. It follows from the definition of $\rejwss{\dprg,
			\twi}$ that $\prg_{\lng - 1}$ is included in $\all{\dprg} \setminus
			\rejwss{\dprg, \twi}$. Consequently, $\twi$ is a model of $\prg_{\lng -
			1}$.
			\medskip

		\item[Fact update:]
			Let $\dprg = \seq{\prg_\lia}_{\lia < \lng}$ be a sequence of consistent
			sets of facts. It follows that regardlessly of $\twi$ and $\lm$,
			\begin{align*}
				\rejwss{\dprg, \twi}
				&=
				\rejrds{\dprg, \twiall}
				= \\
				&\hspace{-0.5cm}=\{
					(\lit.) \in \prg_\lia
					|
					\lia < \lng
					\land
					\exists \lib > \lia
					\;
					\exists \rlb \in \prg_\lib
					:
					\hrlb \in \con{\lit}
				\}.
			\end{align*}
			Thus,
			\begin{align*}
				\all{\dprg} \setminus \rejwss{\dprg, \twi}
				&=
				\rem{\dprg, \twiall}
				=\\
				&\hspace{-1.6cm}=\{
					(\lit.) \in \prg_\lia
					|
					\lia < \lng
					\land
					\forall \lib > \lia
					\;	
					\forall \rlb \in \prg_\lib
					:
					\hrlb \notin \con{\lit}
				\}.
			\end{align*}
			Put
			\begin{align*}
				\twi
				&=
				\{
					\olit \in \olits |
					\exists \lia < \lng : (\olit.) \in \prg_\lia
					\land\\
					&\hspace{3.2cm}(\forall \lib > \lia :
						\Set{\lopp{\olit}., \lcmp{\olit}.} \cap \prg_\lib = \emptyset
					)
				\}.
			\end{align*}
			From the assumption that $\prg_\lia$ is consistent for every $\lia <
			\lng$ it follows that $\twi$ is the single model of $\all{\dprg}
			\setminus \rejwss{\dprg, \twi}$ in which every objective literal is
			supported by a fact from $\rem{\dprg, \twiall}$.
			\medskip

		\item[Support:]
			Follows immediately by the definition of $\modwss{\cdot}$.
			\medskip

		\item[Idempotence:]
			Let $\prg$ be a program. It is not difficult to verify that the
			following holds for any interpretation $\twi$ and level mapping $\lm$:
			\begin{align*}
				\all{\seq{\prg, \prg}} \setminus \rejwss{\seq{\prg, \prg}, \twi}
				&= 
				\all{\seq{\prg}} \setminus \rejwss{\seq{\prg}, \twi}
				=
				\prg
				\enspace,
				\\
				\rem{\seq{\prg, \prg}, \twiall}
				&=
				\rem{\seq{\prg}, \twiall}
				=
				\prg
				\enspace.
			\end{align*}
			Thus, $\twi$ belongs to $\modwss{\seq{\prg}}$ if and only if it belongs
			to $\modwss{\seq{\prg, \prg}}$.
			\medskip

		\item[Absorption:]
			Follows from \textbf{Augmentation}.
			\medskip

		\item[Augmentation:]
			Let $\prga$, $\prgu$, $\prgv$ be programs such that $\prgu \subseteq
			\prgv$. It is not difficult to verify that the following holds for any
			interpretation $\twi$ and level mapping $\lm$:
			\begin{align*}
				\all{\seq{\prga, \prgu, \prgv}}
				\setminus
				\rejwss{\seq{\prga, \prgu, \prgv}, \twi}
				&= \\
				&\hspace{-1.4cm}=
				\all{\seq{\prga, \prgv}}
				\setminus
				\rejwss{\seq{\prga, \prgv}, \twi},
				\\
				\rem{\seq{\prga, \prgu, \prgv}, \twiall}
				&=
				\rem{\seq{\prga, \prgv}, \twiall}
				\enspace.
			\end{align*}
			Thus, $\twi$ belongs to $\modwss{\seq{\prga, \prgu, \prgv}}$ if and only
			if it belongs to $\modwss{\seq{\prga, \prgv}}$.
			\medskip

		\item[Non-interference:]
			Let $\prga$, $\prgu$, $\prgv$ be programs such that $\prgu$ and $\prgv$
			are over disjoint alphabets. It is not difficult to verify that the
			following holds for any interpretation $\twi$ and level mapping $\lm$:
			\begin{align*}
				\all{\seq{\prga, \prgu, \prgv}}
				\setminus
				\rejwss{\seq{\prga, \prgu, \prgv}, \twi}
				&= \\
				&\hspace{-2.2cm}=
				\all{\seq{\prga, \prgv, \prgu}}
				\setminus
				\rejwss{\seq{\prga, \prgv, \prgu}, \twi},
				\\
				\rem{\seq{\prga, \prgu, \prgv}, \twiall}
				&=
				\rem{\seq{\prga, \prgv, \prgu}, \twiall}
				.
			\end{align*}
			Thus, $\twi$ belongs to $\modwss{\seq{\prga, \prgu, \prgv}}$ if and only
			if it belongs to $\modwss{\seq{\prga, \prgv, \prgu}}$.
			\medskip

		\item[Immunity to empty updates:]
			Let $\seq{\prg_\lia}_{\lia < \lng}$ be a DLP such that
			$\prg_\lib = \emptyset$. It is not difficult to verify that the
			following holds for any interpretation $\twi$ and level mapping $\lm$:
			\begin{align*}
				\all{\seq{\prg_\lia}_{\lia < \lng}}
				\setminus
				\rejwss{\seq{\prg_\lia}_{\lia < \lng}, \twi}
				&= \\
				&\hspace{-2.6cm}=
				\all{\seq{\prg_\lia}_{\lia < \lng \land \lia \neq \lib}}
				\setminus
				\rejwss{\seq{\prg_\lia}_{\lia < \lng \land \lia \neq \lib}, \twi},
				\\
				\rem{\seq{\prg_\lia}_{\lia < \lng}, \twiall}
				&=
				\rem{\seq{\prg_\lia}_{\lia < \lng \land \lia \neq \lib}, \twiall}
				.
			\end{align*}
			Thus, $\twi$ belongs to $\modwss{\seq{\prg_\lia}_{\lia < \lng}}$ if and
			only if it belongs to $\modwss{\seq{\prg_\lia}_{\lia < \lng \land \lia
			\neq \lib}}$.
			\medskip

		\item[Immunity to tautologies:]
			Let $\seq{\prg_\lia}_{\lia < \lng}$ be a DLP and $\seq{\prgb_\lia}_{\lia
			< \lng}$ is a sequence of sets of tautologies. It follows from basic
			properties of level mappings that for any interpretation $\twi$ and
			level mapping $\lm$, the sets
			\begin{align*}
				& \all{\seq{\prg_\lia}_{\lia < \lng}} 
					\setminus
					\rejwss{\seq{\prg_\lia}_{\lia < \lng}, \twi}
				\enspace \text{and}\\
				& \all{\seq{\prg_\lia \cup \prgb_\lia}_{\lia < \lng}} 
					\setminus
					\rejwss{\seq{\prg_\lia \cup \prgb_\lia}_{\lia < \lng}, \twi}
			\end{align*}
			differ only in the presence or absence of tautologies. Similarly, the
			sets
			\begin{align*}
				& \rem{\seq{\prg_\lia}_{\lia < \lng}, \twiall}
				&& \text{and}
				&& \rem{\seq{\prg_\lia \cup \prgb_\lia}_{\lia < \lng}, \twiall}
			\end{align*}
			differ only in the presence or absence of tautologies. Consequently,
			\[
				\twi
				\ent
				\all{\seq{\prg_\lia}_{\lia < \lng}} 
				\setminus
				\rejwss{\seq{\prg_\lia}_{\lia < \lng}, \twi}
			\]
			if and only if
			\[
				\twi
				\ent
				\all{\seq{\prg_\lia \cup \prgb_\lia}_{\lia < \lng}} 
				\setminus
				\rejwss{\seq{\prg_\lia \cup \prgb_\lia}_{\lia < \lng}, \twi}
				\enspace.
			\]
			Furthermore, the extra tautological rules in $\rem{\seq{\prg_\lia \cup
			\prgb_\lia}_{\lia < \lng}, \twiall}$ cannot provide well-support for any
			literal, so $\twi$ is well-supported by $\rem{\seq{\prg_\lia}_{\lia <
			\lng}, \twiall}$ if and only if it is well-supported by
			$\rem{\seq{\prg_\lia \cup \prgb_\lia}_{\lia < \lng}, \twiall}$.  Thus,
			$\twi$ belongs to $\modwss{\seq{\prg_\lia}_{\lia < \lng}}$ if and only
			if it belongs to $\modwss{\seq{\prg_\lia \cup \prgb_\lia}_{\lia <
			\lng}}$.

		\item[Causal rejection principle:]
			Follows directly from the definition of $\rejwss{\dprg, \twi}$ and of
			$\modwss{\dprg}$.
	\end{description}
\end{proof}

\begin{theorem*}
	{thm:complexity}
	Let $\dprg$ be a DLP. The problem of deciding whether some $\twi \in
	\modwss{\dprg}$ exists is \NP-complete. Given a literal $\lit$, the problem
	of deciding whether for all $\twi \in \modwss{\dprg}$ it holds that $\twi
	\ent \lit$ is \coNP-complete.
\end{theorem*}
\begin{proof}
	\label{proof:thm:complexity}
	Hardness of these decision problems follows from the property
	\textbf{Generalisation of stable models} (c.f.\ Table~\ref{tab:rule update
	properties} and Thm.~\ref{thm:extended ws other properties}).

	In case of deciding whether some $\twi \in \modwss{\dprg}$ exists,
	membership to \NP{} follows from this non-deterministic procedure that runs in
	polynomial time:
	\begin{enumerate}[1.]
		\item Guess an interpretation $\twi$ and a level mapping $\lm$;

		\item Verify deterministically in polynomial time that $\twi$ is an
			extended \WS-model of $\dprg$ w.r.t.\ $\lm$. If it is, return ``true'',
			otherwise return ``false''.
	\end{enumerate}

	Similarly, deciding whether for all $\twi \in \modwss{\dprg}$ it holds that
	$\twi \ent \lit$ can be done in \coNP{} since the complementary problem of
	deciding whether $\twi \nent \lit$ for some $\twi \in \modwss{\dprg}$
	belongs to \NP{}, as verified by the following non-deterministic polynomial
	algorithm:
	\begin{enumerate}[1.]
		\item Guess an interpretation $\twi$ and a level mapping $\lm$;

		\item Verify deterministically in polynomial time that $\twi$ is an
			extended \WS-model of $\dprg$ and that $\twi \nent \lit$. If this is the
			case, return ``true'', otherwise return ``false''. 
	\end{enumerate}
\end{proof}

\begin{lemma}
	\label{lemma:generalised early recovery:1}
	Let $\dprg = \seq{\prg_\lia}_{\lia < \lng}$ be a DLP such that $\all{\dprg}$
	is an acyclic program w.r.t.\ the level mapping $\lm$, $\twi_0 =
	\emptyset$, for all $\lic \geq 0$, $\twi_{\lic + 1}$ be the set of objective
	literals
\begin{align*}
		\{
			&\hrla \in \olits
			|
			\rl \in \prg_\lia
			\land
			\lm(\hrla) \leq \lic + 1
			\land
			\twi_\lic \ent \brla\\
		& \hspace{1.5cm}\land		
			\lnot
			\left(
				\exists \lib > \lia
				\;
				\exists \rlb \in \prg_\lib
				:
				\hrlb \in \con{\hrla}
				\land
				\twi_\lic \ent \brlb
			\right)
		\}
\end{align*}
	and $\twi = \bigcup_{\lic \geq 0} \twi_\lic$. For every objective literal
	$\olit$ with $\lm(\olit) = \lic_0$ and all $\lic$ such that $\lic \geq
	\lic_0$ the following holds:
	\begin{align*}
		& \olit \in \twi_\lic
		&& \text{if and only if}
		&& \olit \in \twi_{\lic_0}
		\enspace.
	\end{align*}
\end{lemma}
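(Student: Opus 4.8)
The plan is to argue by strong induction on $\lic_0 = \lm(\olit)$, exploiting acyclicity to show that whether $\olit$ enters $\twi_{\lic+1}$ is governed entirely by the part of $\twi_\lic$ living strictly below level $\lic_0$, which the induction hypothesis has already frozen. First I would record the two consequences of acyclicity that drive everything. Since $\all{\dprg}$ is acyclic w.r.t.\ $\lm$, every rule $\rla$ satisfies $\lm(\hrla) > \lmmax(\brla)$, so each literal in $\brla$ has level strictly below $\lm(\hrla)$. Moreover, because $\lm(\olit') = \lm(\lnot \olit')$ and $\lm(\lpnot \olit') = \lm(\olit')$ for every objective literal $\olit'$, any rule $\rlb$ with $\hrlb \in \con{\hrla}$ has $\lm(\hrlb) = \lm(\hrla)$, and hence every literal in $\brlb$ also has level strictly below $\lm(\hrla)$.

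Next I would establish the locality observation. Fix $\olit$ with $\lm(\olit) = \lic_0$ and take any $\lic \geq \lic_0 - 1$, so that the side condition $\lm(\hrla) \leq \lic + 1$ is automatically met by every rule $\rla$ with $\hrla = \olit$. By the construction, $\olit \in \twi_{\lic+1}$ exactly when some $\rla \in \prg_\lia$ has $\hrla = \olit$ and $\twi_\lic \ent \brla$, and no later rule $\rlb \in \prg_\lib$ (with $\lib > \lia$) has $\hrlb \in \con{\olit}$ and $\twi_\lic \ent \brlb$. By the two acyclicity facts, every body literal occurring in such $\brla$ or $\brlb$ has level $< \lic_0$, so the satisfaction relations $\twi_\lic \ent \brla$ and $\twi_\lic \ent \brlb$ depend only on $\Set{\olit' \in \twi_\lic | \lm(\olit') < \lic_0}$. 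Consequently the truth value of ``$\olit \in \twi_{\lic+1}$'' is a fixed function of this restricted set alone.

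For the induction step I would invoke the hypothesis that every objective literal $\olit'$ with $\lm(\olit') < \lic_0$ already stabilises at its own level, i.e.\ $\olit' \in \twi_\lic$ iff $\olit' \in \twi_{\lm(\olit')}$ for all $\lic \geq \lm(\olit')$. Since $\lm(\olit') \leq \lic_0 - 1 \leq \lic$ whenever $\lic \geq \lic_0 - 1$, the set $\Set{\olit' \in \twi_\lic | \lm(\olit') < \lic_0}$ is one and the same for every $\lic \geq \lic_0 - 1$. Feeding this into the locality observation, ``$\olit \in \twi_{\lic+1}$'' has a constant truth value across all $\lic \geq \lic_0 - 1$; equivalently $\olit \in \twi_m$ is constant for all $m \geq \lic_0$, and in particular equals its value at $m = \lic_0$, which is genuinely produced by the construction at step $\lic = \lic_0 - 1 \geq 0$. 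This is exactly the asserted equivalence.

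The step that needs the most care is the base level $\lic_0 = 0$, which is also where the boundary between the directly defined $\twi_0 = \emptyset$ and the formula-defined $\twi_{\lic+1}$ bites. The locality argument only yields constancy of $\olit \in \twi_m$ for $m \geq 1$, whereas the statement demands agreement with $\twi_0 = \emptyset$ as well; these can differ precisely if some level-$0$ head could be derived. The point is that acyclicity forces $\lm(\hrla) > \lmmax(\brla) \geq 0$ for every rule, so no rule can have a head of level $0$; hence an objective literal of level $0$ is never produced and the equivalence collapses to $\olit \notin \twi_\lic$ for all $\lic \geq 0$, which holds since $\twi_0 = \emptyset$. I would therefore make explicit the convention under which $\lmmax$ of an empty body is non-negative, since it is exactly this that excludes level-$0$ heads and makes the stabilisation index land at $\lic_0$ rather than at $\lic_0 + 1$; this is the delicate hinge of the whole argument.
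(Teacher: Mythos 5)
Your proof is correct and takes essentially the same route as the paper's: induction on the level $\lic_0$, using acyclicity (together with $\lm(\olit) = \lm(\lnot \olit)$ and $\lm(\lpnot \olit) = \lm(\olit)$) to confine the membership test for $\olit$ in $\twi_{\lic+1}$ to literals of level strictly below $\lic_0$, which the induction hypothesis has already frozen. Your explicit handling of the base case --- observing that the convention $\lmmax(\emptyset) \geq 0$ is precisely what excludes level-$0$ heads --- makes explicit a point the paper's proof leaves implicit (``would have to have a body with a negative level, which is not possible'').
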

\begin{proof}
	We prove by induction on $\lic_0$:
	\begin{enumerate}[1$^\circ$]
		\item For $\lic_0 = 0$ this follows from the assumption that $\all{\dprg}$
			is acyclic w.r.t.\ $\lm$: since $\lm(\olit) = \lm(\lpnot \olit) = 0$,
			any rule in $\all{\dprg}$ with either $\olit$ or $\lpnot \olit$ in its
			head would have to have a body with a negative level, which is not
			possible.
			
		\item Suppose that the claim holds for all $\lic_0' \leq \lic_0$, we will
			prove it for $\lic_0 + 1$. Take an objective literal $\olit$ with
			$\lm(\olit) = \lic_0 + 1$ and some $\lic \geq \lic_0$. We need to show
			that $\olit \in \twi_{\lic_0 + 1}$ holds if and only if $ \olit \in
			\twi_{\lic + 1}$. Note that $\olit \in \twi_{\lic_0 + 1}$ holds if and
			only if for some $\lia < \lng$ and some $\rl \in \prg_\lia$,
	\begin{align*}
				\hrla &= \olit
				\land
				\twi_{\lic_0} \ent \brla
				\land\\
				&\hspace{1.0cm}\lnot
				\left(
					\exists \lib > \lia
					\;
					\exists \rlb \in \prg_\lib
					:
					\hrlb \in \con{\hrla}
					\land
					\twi_{\lic_0} \ent \brlb
				\right).
	\end{align*}
			Our assumption that $\all{\dprg}$ is acyclic w.r.t.\ $\lm$
			together with the inductive assumption entail that we can
			equivalently write
	\begin{align*}
				\hrla &= \olit
				\land
				\twi_{\lic} \ent \brla
				\land\\
				&\hspace{1.0cm}\lnot
				\left(
					\exists \lib > \lia
					\;
					\exists \rlb \in \prg_\lib
					:
					\hrlb \in \con{\hrla}
					\land
					\twi_{\lic} \ent \brlb
				\right),
	\end{align*}
			which is equivalent to $\olit \in \twi_{\lic + 1}$. 
	\end{enumerate}
\end{proof}

\begin{lemma}
	\label{lemma:generalised early recovery:2}
	Let $\dprg = \seq{\prg_\lia}_{\lia < \lng}$ be a DLP such that $\all{\dprg}$
	is an acyclic program w.r.t.\ the level mapping $\lm$, $\twi_0 =
	\emptyset$, for all $\lic \geq 0$, $\twi_{\lic + 1}$ be the set of objective
	literals
	\begin{align*}
		\{
			&\hrla \in \olits
			|
			\rl \in \prg_\lia
			\land
			\lm(\hrla) \leq \lic + 1
			\land
			\twi_\lic \ent \brla
				\land\\
				&\hspace{2.0cm}\lnot
			\left(
				\exists \lib > \lia
				\;
				\exists \rlb \in \prg_\lib
				:
				\hrlb \in \con{\hrla}
				\land
				\twi_\lic \ent \brlb
			\right)
		\}
	\end{align*}
	and $\twi = \bigcup_{\lic \geq 0} \twi_\lic$. For every literal $\lit$ with
	$\lm(\lit) = \lic_0$ and all $\lic \geq \lic_0$, the following holds:
	\begin{align*}
		& \twi \ent \lit
		&& \text{if and only if}
		&& \twi_{\lic} \ent \lit
		\enspace.
	\end{align*}
\end{lemma}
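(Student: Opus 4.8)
The plan is to reduce the statement to Lemma~\ref{lemma:generalised early recovery:1}, which already controls membership of \emph{objective} literals in the sets $\twi_\lic$. I would first settle the case where $\lit$ is an objective literal $\olit$, and then obtain the default-literal case by complementation, using that the level mapping satisfies $\lm(\lpnot \olit) = \lm(\olit)$, so that a default literal $\lpnot \olit$ of level $\lic_0$ concerns an objective literal $\olit$ of the same level $\lic_0$.

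For the objective case, fix $\olit$ with $\lm(\olit) = \lic_0$ and combine two facts. First, by Lemma~\ref{lemma:generalised early recovery:1}, for every $\lic \geq \lic_0$ we have $\olit \in \twi_\lic$ iff $\olit \in \twi_{\lic_0}$, so membership has stabilised by stage $\lic_0$. Second, the level guard $\lm(\hrla) \leq \lic + 1$ appearing in the definition of $\twi_{\lic + 1}$ forbids any literal of level $\lic_0$ from entering before stage $\lic_0$; together with $\twi_0 = \emptyset$ this gives $\olit \notin \twi_m$ for every $m < \lic_0$. Combining the two, $\olit \in \twi = \bigcup_{m \geq 0} \twi_m$ iff $\olit \in \twi_{\lic_0}$, and hence $\olit \in \twi$ iff $\olit \in \twi_\lic$ for every $\lic \geq \lic_0$. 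Reading membership as satisfaction, this is exactly $\twi \ent \olit$ iff $\twi_\lic \ent \olit$.

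For a default literal $\lit = \lpnot \olit$ with $\lm(\lit) = \lic_0$, note that $\lm(\olit) = \lic_0$ as well, so the objective case applies to $\olit$ and yields $\olit \in \twi$ iff $\olit \in \twi_\lic$ for all $\lic \geq \lic_0$; negating this biconditional gives $\olit \notin \twi$ iff $\olit \notin \twi_\lic$, which is precisely $\twi \ent \lpnot \olit$ iff $\twi_\lic \ent \lpnot \olit$. I expect no substantial obstacle here: the only points requiring care are the bookkeeping with the level guard to exclude premature appearances of level-$\lic_0$ literals, and the explicit use of $\lm(\lpnot \olit) = \lm(\olit)$ to align the levels in the default case. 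Everything else follows immediately from Lemma~\ref{lemma:generalised early recovery:1} and the definition of satisfaction.
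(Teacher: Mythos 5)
Your proof is correct and follows essentially the same route as the paper's: both cases come down to Lemma~\ref{lemma:generalised early recovery:1} combined with the observation that the level guard $\lm(\hrla) \leq \lic + 1$ (together with $\twi_0 = \emptyset$) prevents any literal of level $\lic_0$ from entering a stage earlier than $\lic_0$. The only cosmetic difference is that the paper argues the default-literal case directly in parallel with the objective case, whereas you obtain it by complementing the objective-case biconditional -- a harmless repackaging, since satisfaction of $\lpnot \olit$ is by definition non-membership of $\olit$.
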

\begin{proof}
	Take some literal $\lit$ with $\lm(\lit) = \lic_0$ and $\lic \geq \lic_0$.
	We consider two cases:
	\begin{enumerate}[a)]
		\item If $\lit$ is an objective literal $\olit$, then $\twi \ent \lit$
			holds if and only if for some $\lic_1 \geq 0$, $\olit \in
			\twi_{\lic_1}$. It follows from the definition of $\twi_\lic$ that for
			$\lic < \lic_0$ this cannot be the case, so $\twi \ent \lit$ holds if
			and only if for some $\lic_1 \geq \lic_0$, $\olit \in \twi_{\lic_1}$. By
			Lemma~\ref{lemma:generalised early recovery:1}, this is equivalent to
			$\twi_\lic \ent \lit$.

		\item If $\lit$ is a default literal $\lpnot \olit$, then $\twi \ent \lit$
			holds if and only if for all $\lic_1 \geq 0$, $\olit \notin
			\twi_{\lic_1}$. Due to the definition of $\twi_{\lic_1}$, for $\lic_1 <
			\lic_0$ this is guaranteed, so $\twi \ent \lit$ holds if and only if for
			all $\lic_1 \geq \lic_0$, $\olit \notin \twi_{\lic_1}$. By
			Lemma~\ref{lemma:generalised early recovery:1}, this is equivalent to
			$\twi_\lic \ent \lit$. 
	\end{enumerate}
\end{proof}

\begin{theorem*}
	{thm:generalised early recovery}
	The extended \RD-semantics and extended \WS-semantics satisfy the
	generalised early recovery principle.
\end{theorem*}
\begin{proof}
	\label{proof:thm:generalised early recovery}
	Let $\dprg = \seq{\prg_\lia}_{\lia < \lng}$ be a DLP such that $\all{\dprg}$
	is an acyclic program w.r.t.\ the level mapping $\lm$, and let
	$\seq{\twi_\lic}_{\lic \geq 0}$ and $\twi$ be as in
	Lemma~\ref{lemma:generalised early recovery:2}. Our goal is to show that
	$\twi$ is an extended \WS-model of $\dprg$ w.r.t.\ $\lm$, i.e.\ we need to
	verify the following three statements:
	\begin{enumerate}[1)]
		\item $\twi$ is a consistent set of objective literals, i.e.\ it is an
			interpretation;

		\item $\twi$ is a model of $\all{\dprg} \setminus \rejwss{\dprg, \twi}$;

		\item For every objective literal $\olit \in \twi$ there exists some rule
			$\rl \in \rem{\dprg, \twiall}$ such that $\hrl = \olit$, $\twi \ent
			\brl$ and $\lm(\hrl) > \lmmax(\brl)$.
	\end{enumerate}
	We prove each statement separately.
	\begin{enumerate}[1)]
		\item To show that $\twi$ is a consistent set of objective literals,
			suppose that for some $\olit \in \olits$, both $\olit$ and $\lnot \olit$
			belong to $\twi$. Also, suppose that $\lm(\olit) = \lic$. By
			Lemma~\ref{lemma:generalised early recovery:2} we conclude that
			$\twi_\lic$ contains both $\olit$ and $\lpnot \olit$. Thus, by the
			definition of $\twi_\lic$, for some $\lia < \lng$ there must exist rules
			$\rla, \rlb \in \prg_\lia$ such that $\hrla = \olit$, $\hrlb = \lpnot
			\olit$, $\twi \ent \brla$ and $\twi \ent \brlb$. But then we obtain a
			conflict with the assumption that all conflicts in $\dprg$ are solved
			since it follows that for some $\lib > \lia$ there is a fact $\rlb' \in
			\prg_\lib$ such that either $\hrl[\rlb'] \in \con{\hrla}$ or
			$\hrl[\rlb'] \in \con{\hrlb}$.
			
		\item In order to prove that $\twi$ is a model of $\all{\dprg} \setminus
			\rejwss{\dprg, \twi}$, take some rule
			\[
				\rla
				\in
				\all{\dprg}
				\setminus
				\rejwss{\dprg, \twi}
			\]
			and assume that $\twi \ent \brla$. Let $\lm(\hrla) = \lic_0$.
			We consider two cases:
			\begin{enumerate}[a)]
				\item If $\hrla$ is an objective literal $\olit$, then it follows from
					the definition of $\twi_\lic$, the definition of $\rejwss{\dprg,
					\twi}$ and Lemma~\ref{lemma:generalised early recovery:2} that
					$\olit \in \twi$. Thus, $\twi \ent \hrla$.

				\item If $\hrla$ is a default literal $\lpnot \olit$, then it follows
					from the definition of $\twi_\lic$, definition of $\rejwss{\dprg,
					\twi}$, the assumption that all conflicts in $\dprg$ are solved and
					Lemma~\ref{lemma:generalised early recovery:2} that $\olit \notin
					\twi$. Thus, $\twi \ent \hrla$.
			\end{enumerate}

		\item Finally, we need to demonstrate that for every $\olit \in \twi$
			there exists some rule $\rl \in \rem{\dprg, \twiall}$ such that $\hrl =
			\olit$, $\twi \ent \brl$ and $\lm(\hrl) > \lmmax(\brl)$. This follows
			from the definition of $\twi$ and of $\rejrds{\dprg, \twiall}$.
	\end{enumerate}
\end{proof}


\balance
\label{lastpage}
\end{document}